\newcommand{\SVGL}{\texttt{SGUL}\xspace}
\newcommand{\SVGLShapley}{\texttt{SGUL-Shapley}\xspace}
\newcommand{\SVGLAccuracy}{\texttt{SGUL-Accuracy}\xspace}
\newtheorem{definition}{Definition}
\newtheorem{theorem}{Theorem}
\title{Shapley-Guided Utility Learning for Effective\\
Graph Inference Data Valuation}
\author{
    Hongliang Chi$^1$, Qiong Wu$^2$, Zhengyi Zhou$^2$, Yao Ma$^1$\\[0.3ex] 
    $^1$Rensselaer Polytechnic Institute, Troy, NY, United States\\
    $^2$AT\&T Chief Data Office, Bedminster, NJ, United States\\[0.3ex] 
    {\tt \{chih3@rpi.edu, qw6547@att.com, zz547k@att.com, may13@rpi.edu\}}
}
\begin{document}
\maketitle
\footnotetext{\small{Part of this work was done while the first author was an intern at AT\&T CDO.}}

\footnotetext{\small{Code is released at \url{https://github.com/frankhlchi/infer_data_valuation}.}}


\maketitle

\begin{abstract}
Graph Neural Networks (GNNs) have demonstrated remarkable performance in various graph-based machine learning tasks, yet evaluating the importance of neighbors of testing nodes remains largely unexplored due to the challenge of assessing data importance without test labels. To address this gap, we propose Shapley-Guided Utility Learning (\SVGL), a novel framework for graph inference data valuation. \SVGL innovatively combines transferable data-specific and model-specific features to approximate test accuracy without relying on ground truth labels. By incorporating Shapley values as a preprocessing step and using feature Shapley values as input, our method enables direct optimization of Shapley value prediction while reducing computational demands. \SVGL overcomes key limitations of existing methods, including poor generalization to unseen test-time structures and indirect optimization. Experiments on diverse graph datasets demonstrate that \SVGL consistently outperforms existing baselines in both inductive and transductive settings. \SVGL offers an effective, efficient, and interpretable approach for quantifying the value of test-time neighbors. 
\end{abstract}

\section{Introduction}
Data valuation, the task of quantifying the value of individual data points for machine learning (ML) tasks, has gained significant attention in recent years. As ML models and datasets continue to grow in scale and complexity, understanding the contribution of each data point becomes crucial for fair compensation, dataset curation, and business strategy-making \citep{pei2020survey,sim2022data, dallemule2017s}.  Concurrently, Graph Neural Networks (GNNs) \citep{kipf2016semi, velivckovic2017graph, wu2019simplifying, klicpera2018predict} have demonstrated remarkable performance in various graph-based machine learning tasks, including social network analysis \citep{fan2019deep, goldenberg2021social}, recommendation systems \citep{fan2019graph, xu2020graphsail}, and molecular property prediction \citep{li2022deep,wu2023chemistry}. Notably, the success of GNNs, like other ML methods, heavily relies on informative data.

Unlike Euclidean data, graph data is characterized by its non-independent and identically distributed (non-IID) nature, where nodes influence each other according to the graph structure. This property necessitates specialized data valuation methods for graph data. Addressing this necessity, recent work has extended data valuation methods to graph data, introducing the concept of graph data valuation to assess the contribution of graph structures \citep{chi2024precedence}. For general data valuation, game theoretic approaches \citep{ghorbani2019data, kwon2021beta, wang2023data, chi2024precedence} have been widely adopted. As with other game theoretic methods, the value of a graph structure is derived from its marginal contributions to different subsets of the dataset. These marginal contributions are quantified by a utility function, which measures how the model's performance changes with or without a particular graph structure. Specifically, the utility function maps from a set of players (training graph structures) to their joint contribution to the model's performance, typically measured using accuracy on labeled validation nodes. Data values are then computed based on how the utility function's results change across different subsets of graph structures.

Importantly, graph structures can contribute not only during training but also during inference, thus possessing inherent value. Recent work \citep{yang2022graph} has further emphasized the importance of graph structures at test-time, demonstrating that utilizing only testing graph information can achieve performance comparable to full GNNs. In real-world applications, such as GNN-based recommender systems, real-time recommendations often rely on dynamic graphs derived from social networks or item interactions. In these scenarios, assessing the value of inference-time graph data is crucial for optimizing user experience. While evaluating inference-time graph data is key for GNN performance improvement and real-world applications, most existing methods focus on training data valuation, leaving the value of inference graph structures, especially test node neighbors, largely unexplored.

However, designing data valuation methods for graph inference structures faces fundamental challenges: \textbf{1. Lack of test labels:} Traditional game-theoretic approaches rely on labeled validation data as a proxy for model performance on testing data. In the case of graph inference data valuation, accuracy on test nodes directly represents model performance. However, the ground truth labels are unavailable for test nodes. \textbf{2. Limitations of existing utility learning methods:} While surrogate models could potentially approximate utility functions for test-time structures, current methods \citep{wang2021improving} are inadequate for graph inference data valuation due to two primary issues: \textit{ (a) Inapplicability to unseen test-time structures:} Existing models are typically trained on mappings from training data subsets to validation accuracy. This approach fails when applied to a different set of players, such as test node neighbors. \textit{(b) Indirect and inefficient optimization:} Current methods often use accuracy as the utility learning objective. This approach does not directly optimize the fitted Shapley values against ground truth Shapley values, which is the ultimate goal of data valuation. Furthermore, this indirect approach necessitates processing and storing large amounts of accuracy-level data, leading to increased memory requirements. This inefficiency becomes particularly problematic when dealing with large-scale graphs.

To address these challenges and enable effective graph inference data valuation, we propose Shapley-Guided Utility Learning (\SVGL), a novel interpretable and efficient framework for estimating graph inference data value without relying on test labels. Our key contributions are as follows: 
\begin{compactenum}[\textbullet]
\item We are the \textit{first} to formulate the graph inference data valuation problem. This addresses a significant gap in the field, where the importance of test-time graph structures has been recognized but not quantified through data valuation methods.
\item We propose \SVGL, a novel utility-learning framework for graph inference data valuation that addresses the challenge of valuation without test labels. Our approach introduces a transferable feature extraction method that transforms player-dependent inputs into general features. 
\item We develop a Shapley-guided optimization method that enables direct optimization of Shapley values, improving computational efficiency and model effectiveness.
\item We conduct extensive experiments on various graph datasets, demonstrating that \SVGL consistently outperforms baseline methods in graph inference data valuation tasks.
\end{compactenum}

\section{Preliminary Study}\label{sec:preliminary}
\subsection{Game-theoretic Data Valuation}\label{subsec:game_theoretic_valuation}

Data valuation, the task of quantifying the contribution of individual data points to machine learning (ML) tasks, has gained significant attention in recent years. Game-theoretic approaches, particularly those based on cooperative game theory, have emerged as a prominent framework for data valuation \citep{ghorbani2019data, kwon2021beta, wang2023data}. In this context, i.i.d training data points are treated as players in a cooperative game, where they can form coalitions to contribute collectively to model performance.

The Shapley value, a solution concept from cooperative game theory, has been widely adopted for fair allocation of value among players. For a player $i$ in a set of players $\mathcal{D}$, its Shapley value is defined as: $\phi_i(\mathcal{D}, U) = \frac{1}{|\Pi(\mathcal{D})|} \sum_{\pi \in \Pi(\mathcal{D})}\left[U\left(\mathcal{D}_i^\pi \cup \{i\}\right) - U\left(\mathcal{D}_i^\pi\right)\right]$\label{eq:shapley_value} where $\Pi(\mathcal{D})$ is the set of all permutations of $\mathcal{D}$, $\mathcal{D}_i^\pi$ is the set of players that appear before $i$ in permutation $\pi$, and $U$ is the utility function. The Shapley value $\phi_i(\mathcal{D}, U)$ represents the average marginal contribution of data point $i$ across all possible coalitions. In practice, to reduce computational complexity, we often estimate Shapley values by sampling a subset of permutations rather than considering all possible permutations \citep{ghorbani2019data, jia2019towards}. Central to these approaches is the utility function, which quantifies the contribution of a subset of players to the overall performance. Formally, we can define the utility function as: $U: 2^\mathcal{D} \rightarrow \mathbb{R}$ where $2^\mathcal{D}$ represents the power set of $\mathcal{D}$. For any subset $S \subseteq \mathcal{D}$, $U(S)$ measures the performance achieved by this subset.

The representative work of Data Shapley \citep{ghorbani2019data} first introduced Shapley value for data valuation. In this framework, as well as in subsequent game-theoretic approaches \citep{kwon2021beta, wang2023data}, training samples are treated as players, and the utility function is typically defined based on the model's performance on a validation set: $U(S) = \text{Acc}(f_S, \mathcal{D}_{\text{val}})$ where $S \subseteq \mathcal{D}_{\text{tr}}$ is a subset of the training dataset, $f_S$ denotes a model trained on subset $S$, and $\mathcal{D}_{\text{val}}$ is a validation set used to evaluate the model's performance. This formulation allows us to quantify the contribution of different subsets of training data to the model's predictive accuracy.

\subsection{Graph Data Valuation}\label{subsec:graph_training_valuation}
While Data Shapley and subsequent methods \citep{ghorbani2019data, kwon2021beta, wang2023data} have been effective for i.i.d. data, they face significant challenges when applied to graph-structured data. A key challenge is capturing the hierarchical and dependent relationships among graph elements. In GNNs, a node's contribution to model performance is intricately linked to its position within the computation tree and its relationships with other nodes. Traditional Shapley value calculations, which treat all players (nodes) independently, fail to account for these crucial dependencies, potentially leading to inaccurate valuations. To address these challenges, recent work has proposed a more granular approach to graph data valuation \citep{chi2024precedence}. This approach introduces two key constraints to the Shapley value calculation: the Level Constraint and the Precedence Constraint. These constraints capture the hierarchical and dependent relationships among graph elements. For a detailed discussion of these constraints, please refer to Appendix \ref{app:graph_training_valuation}.

\section{Methodology}
\subsection{Graph Inference Data Valuation Problem} \label{sec:problem}
  GNNs have demonstrated remarkable performance across various real-world applications, where the $k$-hop neighbors of key nodes provide essential context for predictions. For example, in protein-protein interaction networks, traffic systems, and social networks, neighboring nodes play vital roles in predicting properties of nodes of interests \citep{jha2022prediction, wang2022traffic, liu2021content}. Therefore, quantifying the importance of test-time neighboring structures becomes a critical challenge for improving GNN performance, particularly for applications requiring real-time inference.

Given these considerations, we formally introduce the graph inference data valuation problem:

\textbf{Input Setup:} (1) Training graph $\mathcal{G}_{\text{Tr}} = (\mathcal{V}_{\text{Tr}}, \mathcal{E}_{\text{Tr}}, \mathbf{X}_{\text{Tr}})$ with labeled nodes $\mathcal{V}_{\text{Tr}}^l \subset \mathcal{V}_{\text{Tr}}$ and labels $\mathbf{Y}_{\text{Tr}}$, and a fixed trained GNN model $f(\cdot)$;
(2) Validation graph $\mathcal{G}_{\text{Val}} = (\mathcal{V}_{\text{Val}}, \mathcal{E}_{\text{Val}}, \mathbf{X}_{\text{Val}})$ with labeled nodes $\mathcal{V}_{\text{Val}}^l$ and labels $\mathbf{Y}_{\text{Val}}$;
(3) Test graph $\mathcal{G}_{\text{Te}} = (\mathcal{V}_{\text{Te}}, \mathcal{E}_{\text{Te}}, \mathbf{X}_{\text{Te}})$ with target nodes $\mathcal{V}_t$ but no labels.

With this setup, we define the graph inference data valuation problem as follows:
\begin{definition}[Graph Inference Data Valuation]
Given a set of target test nodes $\mathcal{V}_t \subset \mathcal{V}_{\text{Te}}$, their neighborhood $\mathcal{N}(\mathcal{V}_t)$, and a downstream task $T$, the goal of graph inference data valuation is to learn a value-assignment function $\phi: \mathcal{N}(\mathcal{V}_t) \rightarrow \mathbb{R}$ that assigns scores to the neighbors of $\mathcal{V}_t$ based on their contribution to the performance of $f(\cdot)$ on task $T$. 
\end{definition}

\textit{Graph inference data valuation problem} primarily focuses on $\mathcal{N}_k(\mathcal{V}_t)$, the $k$-hop neighborhood of target nodes $\mathcal{V}_t$, where $k$ typically equals the number of GNN layers $L$. This choice naturally aligns with GNNs' message passing mechanism, ensuring we capture all nodes that influence target nodes' predictions through the network's receptive field.

\subsection{Structure-Aware Shapley Values for Graph Inference Data Valuation } \label{sec:design_choice}
To address the graph inference data valuation problem, we adopt the Structure-Aware Shapley value formulation with connectivity constraints \citep{chi2024precedence}. Unlike PC-Winter \citep{chi2024precedence} which defines players as individual nodes in computation trees, our test-time valuation focuses solely on test node neighbors. This fundamental difference in player definition naturally leads us to use only the Precedence constraint, which ensures connectivity between added nodes during value calculation, as the Level constraint from training-time valuation becomes inapplicable to our setting.

Formally, given a set of target test nodes $\mathcal{V}_t \subset \mathcal{V}_{\text{Te}}$ and their collective neighborhood $\mathcal{N}(\mathcal{V}_t)$, we define the \textit{Structure-Aware Shapley value} for each neighbor as follows:

\begin{equation}
\phi_i(\mathcal{N}(\mathcal{V}_t), U) = \frac{1}{|\Omega(\mathcal{N}(\mathcal{V}_t))|} \sum_{\pi \in \Omega(\mathcal{N}(\mathcal{V}_t))}\left[U\left(\mathcal{N}_i^\pi(\mathcal{V}_t) \cup \{i\}\right) - U\left(\mathcal{N}_i^\pi(\mathcal{V}_t)\right)\right]
\label{eq:data_value_graph}
\end{equation}
where $U: 2^{\mathcal{N}(\mathcal{V}_t)} \rightarrow \mathbb{R}$ is a utility function mapping from the power set of $\mathcal{N}(\mathcal{V}_t)$ (all possible subsets of neighbors) to real numbers, $\Omega(\mathcal{N}(\mathcal{V}_t))$ is the set of all permissible permutations that satisfy the graph connectivity constraints, and $\mathcal{N}_i^\pi(\mathcal{V}_t)$ is the set of neighbors that appear before $i$ in permutation $\pi$.

As shown above, the utility function $U(\cdot)$ is the most critical input for data value formulation. In traditional data valuation scenarios, as discussed in Section \ref{subsec:game_theoretic_valuation}, the utility function typically maps subsets of training data to their performance on a validation set. For instance, $U(S) = \text{Acc}(f_S, \mathcal{D}_{\text{val}})$, where $f_S$ is a model trained on subset $S$, and $\mathcal{D}_{\text{val}}$ is a validation set. In our graph inference data valuation context, the utility function should ideally map subsets of test neighbors to their collective contribution to the GNN's performance on target nodes. While this formulation provides a theoretically sound approach to graph inference data valuation, we face a significant challenge: \textit{for test nodes, the absence of ground truth labels makes it impossible to directly measure accuracy and define an appropriate utility function.}

\section{Shapley-Guided Utility Learning (\SVGL)} \label{sec:utility_design}
As shown in the prior section, the critical challenge in solving the graph inference data valuation problem lies in obtaining an appropriate utility function without testing labels. To address this issue, we propose an approach to learn the utility function that can effectively value test node neighbors.

\subsection{Utility Learning in Data Valuation}
\label{subsec:utility_learning}
Despite the crucial role of utility functions in data valuation, as introduced in Section \ref{subsec:game_theoretic_valuation}, accessing these functions can often be costly or even impossible. For instance, the utility function we previously discussed for training data valuation requires model retraining, which can be computationally expensive for large datasets. To address this challenge, utility learning methods have been introduced.

Utility learning aims to approximate the true utility function in a data-driven way. The general formulation of utility learning can be expressed as:
$\hat{U}(S) = h(g(S))
\label{eq:utility_learning}$ where $S$ is a subset of data points, $g(\cdot)$ is a feature extractor that captures relevant characteristics of the subset, and $h(\cdot)$ is a learnable function that maps these characteristics to an estimated utility value. This approach differs from the original utility function by avoiding direct model retraining for each subset, instead learning to predict utility based on subset features. To train this approximation function, we construct a dataset of utility samples. Each sample consists of a subset of data points and its corresponding true utility value, obtained by evaluating the original utility function on a limited number of subsets. 

A notable instance of utility learning is the approach for training data valuation proposed by \citet{wang2021improving}. This method efficiently estimates model performance on various subsets of the training data without repeated model retraining. It represents subsets as binary vectors as $g(S)$, where each element corresponds to a specific data point in the training set, indicating its presence or absence. The method uses a regression model $\hat{U}_{\text{train}}$ to approximate the ground truth utility function $U: 2^{\mathcal{D}_{\text{tr}}} \rightarrow \mathbb{R}$, where $U(S) = \text{Acc}(f_S, \mathcal{D}_{\text{val}})$. Here, $f_S$ denotes a model trained on the subset $S \subseteq \mathcal{D}_{\text{tr}}$, and $\mathcal{D}_{\text{val}}$ is the validation set. The regression model is trained on a set of utility samples to minimize the difference between predicted and true utility values.

While this approach is effective for training data, it faces significant challenges when applied to graph inference data valuation: \textbf{1. Player Dependence:} The input of $\hat{U}_{\text{train}}$, using player-specific dummy variables, is tied to a specific set of players (data points) and cannot be directly transferred to new player sets or games. In the context of graph inference data valuation, this limitation becomes particularly problematic. While we have access to the labels of the validation graph $\mathcal{G}_{\text{Val}}$, allowing us to compute the validation accuracy for different subsets of neighbor nodes on labeled validation nodes, the utility function learned on $S \subseteq \mathcal{N}(\mathcal{V}_{\text{Val}}^l) \in \mathcal{G}_{\text{Val}}$ cannot be directly applied to test nodes in $\mathcal{G}_{\text{Te}}$. This is because the test nodes represent a new set of players from $\mathcal{N}(\mathcal{V}_t)$, which were not present during the utility function learning process. 
\textbf{2. Indirect optimization:}  As discussed in Section \ref{subsec:game_theoretic_valuation}, the utility function is a crucial input for data valuation solutions. Following this logic, current utility learning approaches employ a two-stage process: learning the utility function with sampled training subsets and corresponding validation accuracy, then estimating Shapley values with the fixed learned utility. This indirect method has two main drawbacks: (a) it requires handling computationally expensive accuracy-level data, and (b) optimizing for accuracy prediction doesn't necessarily lead to optimal Shapley value estimation.

\subsection{Shapley-Guided Utility Learning}
To address the challenges of player dependence and indirect optimization in graph inference data valuation, we propose Shapley-Guided Transferable Utility Learning (\SVGL). This novel approach focuses on two key aspects: (1) Transferable Feature Extraction: We introduce a method that transforms player-dependent inputs into transferable, performance-related features. These features capture both graph structure and model behavior without relying on test labels, enabling our utility learning model to generalize across different player sets and graphs. (2) Shapley-guided Optimization: We develop a method that enables direct optimization of Shapley values, addressing the limitations of indirect optimization approaches.

\subsubsection{Transferable Feature Extraction}
As mentioned in Section \ref{subsec:utility_learning}, traditional utility learning approaches often use player-specific binary vectors as input \citep{wang2021improving}. While effective for training data, this method faces limitations when applied to graph inference data valuation, particularly due to its inability to transfer to new player sets. To overcome the limitations of traditional utility learning approaches, we propose a novel feature extraction method that transforms player-dependent inputs into transferable, performance-related features. This approach builds upon the general utility learning formulation introduced in Equation \eqref{eq:utility_learning}, focusing on designing a transferable feature extractor $g(S)$.  Specifically, our proposed $g(S)$ aims to capture both structural and model-specific characteristics of the graph data. The input $S$ is a set of neighboring nodes of target test nodes. As discussed in Section \ref{subsec:graph_training_valuation}, we also employ a permutation sampling process for estimating \textit{Structure-Aware Shapley value}. During this process, we incrementally add neighboring nodes to target test nodes following permissible permutations. This generates a series of test subgraphs $\mathcal{G}_{\text{sub}} = (\mathcal{V}_{\text{sub}}, \mathcal{E}_{\text{sub}}, \mathbf{X}_{\text{sub}})$, each including the added neighboring nodes and target test nodes. The function $g(\cdot)$ maps the current neighbor node set $S \in \mathcal{V}_{\text{sub}}$ to a $d$-dimensional feature vector $\mathbf{x} \in \mathbb{R}^d$, which encapsulates the characteristics of the subgraph induced by $S$ as derived from the permutation. These features serve as proxies for model accuracy, capturing both graph structure and GNN behavior without relying on true labels. 

Our feature vector $\mathbf{x}$ comprises two main categories: data-specific and model-specific features. Data-specific features capture graph structure and test node relationships to the training set, including edge cosine similarity, representation distance, and classwise representation distance. Model-specific features assess prediction confidence and uncertainty using the GNN model's output. These include maximum predicted confidence, target class confidence, propagated confidence measures, negative entropy, and confidence gap. By combining these features, we create a comprehensive representation of GNN performance on test subgraphs without relying on true labels. This approach allows us to estimate the utility function effectively, even in the absence of ground truth information for test nodes. A detailed description and mathematical formulation of each feature is provided in Appendix \ref{app:features}.

\subsubsection{\SVGLAccuracy and its Learning Process} \label{subsec:implementation}
With our transferable feature extraction approach, we can adapt traditional utility learning methods to create an accuracy-oriented variant of our method. Following \citep{wang2021improving}, we can apply these features directly in a standard regression framework. This base model, termed \SVGLAccuracy, serves as an important comparison point for our proposed Shapley-guided method in Section \ref{sec:method}. For a detailed description of \SVGLAccuracy training process, please refer to Appendix \ref{app:feature_extraction_implementation}.

\subsubsection{Shapley-Guided Utility Learning (\SVGL)} \label{sec:method}
Having established our transferable utility learning framework and selected features, we now address the challenge of efficiently optimizing our utility function. Traditional accuracy-based approaches we seen in Section \ref{subsec:utility_learning} employ a two-stage optimization process: first learning a utility function to predict model accuracy, then using this function to estimate Shapley values. However, this indirect accuracy-based method has significant limitations. It is computationally expensive due to the large amount of accuracy-level data required, especially for large-scale graphs. More importantly, the optimization process in these accuracy-oriented methods is misaligned with the ultimate goal of data valuation. While the first stage focuses on accurately predicting model accuracy, it doesn't directly minimize the difference between predicted Shapley values (derived from the learned utility function) and true Shapley values (calculated using actual accuracy). Our aim is to develop a method that directly optimizes for accurate Shapley value prediction, ensuring that the learned utility function could produce Shapley values that closely match those derived from true accuracy measurements.

Through theoretical analysis of the Shapley value definition, we discover a key insight: the Shapley value calculation is a deterministic linear transformation of the utility function. This insight is formalized in the following theorem:

\begin{theorem}[Shapley Value Decomposition]
Given a linear utility function $U(S) = \mathbf{w}^\top \mathbf{x}(S)$, where $\mathbf{w} \in \mathbb{R}^d$ is a parameter vector and $\mathbf{x}(S) \in \mathbb{R}^d$ is a feature vector representing subset $S$, the Shapley value of player $i$ with respect to $U$ can be expressed as a linear combination of Feature Shapley Values:
\begin{equation}
\phi_i(U) = \mathbf{w}^\top \boldsymbol{\psi}_i
\end{equation}
where $\boldsymbol{\psi}_i = [\phi_i(U_1), \phi_i(U_2), \ldots, \phi_i(U_d)]^\top$ is the vector of Feature Shapley Values, and $U_k(S) = x_k(S)$ is the utility function considering only the $k$-th feature.
\end{theorem}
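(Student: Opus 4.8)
The plan is to exploit the linearity of the Shapley value in its utility argument. First I would recall the defining formula (Equation~\eqref{eq:data_value_graph}, or equivalently the unconstrained version for $\phi_i(\mathcal{D},U)$ stated earlier): the Shapley value is an average over permutations $\pi$ of marginal contributions $U(\mathcal{N}_i^\pi(\mathcal{V}_t)\cup\{i\}) - U(\mathcal{N}_i^\pi(\mathcal{V}_t))$. The crucial structural observation is that this expression depends on $U$ only through evaluations $U(S)$ at various subsets $S$, and it combines them by subtraction and by averaging with fixed (permutation-counting) coefficients — i.e., $U\mapsto \phi_i(U)$ is a \emph{linear} map on the space of set functions.

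Next I would substitute the assumed linear form $U(S)=\mathbf{w}^\top\mathbf{x}(S)=\sum_{k=1}^d w_k\, x_k(S)$ into each marginal contribution term. Since $w_k$ are constants independent of $S$ and $\pi$, both the difference inside the sum and the outer average over permutations pass through the finite sum over $k$, giving
\begin{equation}
\phi_i(U) \;=\; \frac{1}{|\Omega|}\sum_{\pi\in\Omega}\sum_{k=1}^d w_k\bigl[x_k(\mathcal{N}_i^\pi(\mathcal{V}_t)\cup\{i\}) - x_k(\mathcal{N}_i^\pi(\mathcal{V}_t))\bigr] \;=\; \sum_{k=1}^d w_k\,\phi_i(U_k),
\end{equation}
where in the last step I identify the inner permutation-average of marginal contributions of the $k$-th coordinate as exactly $\phi_i(U_k)$ with $U_k(S)=x_k(S)$. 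Writing $\boldsymbol{\psi}_i=[\phi_i(U_1),\dots,\phi_i(U_d)]^\top$ then yields $\phi_i(U)=\mathbf{w}^\top\boldsymbol{\psi}_i$, which is the claim. I would note that the argument is agnostic to which permutation set is used — it works verbatim for the full symmetric group or for the connectivity-constrained set $\Omega(\mathcal{N}(\mathcal{V}_t))$ — because only the interchange of finite sums is used, not any property specific to $\Omega$.

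There is no serious obstacle here; the result is essentially the well-known linearity axiom of the Shapley value applied coordinatewise, and the proof is a one-line interchange of summation. The only point deserving care is bookkeeping: making explicit that $\mathbf{w}$ and the normalization $1/|\Omega|$ are constants that commute past the sums, and confirming that the per-permutation summand for feature $k$, once averaged, matches the \emph{definition} of $\phi_i(U_k)$ with the \emph{same} permutation set $\Omega$. I would also briefly remark on why this matters downstream — it shows $\phi_i(U)$ is affine-linear in $\mathbf{w}$ with known coefficients $\boldsymbol{\psi}_i$ that can be precomputed once, so fitting $\mathbf{w}$ to match target Shapley values becomes a plain linear regression in $\mathbf{w}$, which is the point the Shapley-guided optimization in Section~\ref{sec:method} relies on.
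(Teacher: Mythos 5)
Your proof is correct and follows essentially the same route as the paper's: both arguments rest solely on the linearity of the Shapley operator in the utility, substituting $U(S)=\mathbf{w}^\top\mathbf{x}(S)$ and pulling $\mathbf{w}$ through the fixed-coefficient sum so that the remaining averages are recognized as the Feature Shapley Values $\phi_i(U_k)$. The only cosmetic difference is that you work from the permutation-average form (and usefully note it applies verbatim to the connectivity-constrained set $\Omega(\mathcal{N}(\mathcal{V}_t))$), whereas the paper's appendix proof uses the equivalent subset-weighted formula $\sum_{S\subseteq\mathcal{N}\setminus\{i\}}\frac{|S|!\,(n-|S|-1)!}{n!}[U(S\cup\{i\})-U(S)]$.
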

The proof of this theorem is provided in Appendix~\ref{app:theorem_proof}. This theorem establishes a direct link between learnable parameters and fitted Shapley values: $\hat{\phi}_i(U) = \mathbf{w}^\top \boldsymbol{\psi}_i$, where $\hat{\phi}_i(U)$ is the fitted Shapley value for $i$, $\mathbf{w}$ is our learnable parameter vector, and $\boldsymbol{\psi}_i$ is the Feature Shapley vector for $i$. Unlike previous decoupled accuracy-oriented optimization that solve $\hat{\phi}_i(\hat{U})$ subject to $\hat{U}(S) = \arg\min \mathcal{L}(\hat{U}(S), U(S))$ ($\mathcal{L}$ is the loss), our approach directly optimizes the fitted Shapley values from learned utility function with the help of this theorem.

Building on this, we propose Shapley-Guided Transferable Utility Learning (\SVGL), which integrates our transferable utility learning framework with a Shapley-guided optimization method. To learn the optimal parameter vector $\mathbf{w}$, we formulate the following optimization problem:

\begin{equation}
\min_{\mathbf{w}} \sum_{i \in \mathcal{N}(\mathcal{V}_{\text{Val}})} (\phi_i(U) - \mathbf{w}^\top \boldsymbol{\psi}_i)^2 + \lambda \|\mathbf{w}\|_1
\label{eq:optimization}
\end{equation}

Here, $\mathcal{V}_{\text{Val}}$ represents the set of labeled validation nodes, and $\mathcal{N}(\cdot)$ denotes the set of neighbors for a given node set. The regularization parameter $\lambda$ controls the trade-off between fitting the data and model complexity. This optimization problem is designed to find the optimal weight vector $\mathbf{w}$ that quantifies the importance of each feature in the utility function. The input features are the Feature Shapley Values $\boldsymbol{\psi}_i$ for each neighbor $i$, computed as a preprocessing step. The target variable is the true Shapley value $\phi_i(U)$, which we can compute using the known accuracy on the validation set.

By solving this optimization problem, we obtain the optimal parameter vector $\mathbf{w}^*$ that minimizes the difference between predicted and true Shapley values across all validation nodes and their neighbors. This direct optimization approach contrasts with traditional methods that focus on predicting accuracy rather than Shapley values. The inclusion of $L_1$ regularization $(\|\mathbf{w}\|_1)$ in the objective function promotes sparsity in the learned weights, effectively identifying the most relevant features for Shapley value estimation and helping prevent overfitting. Importantly, the learned coefficients are interpretable, as our proposed features are theoretically positively correlated with accuracy. To ensure this interpretability, we constrain the parameters to be non-negative during the learning process. Once we have obtained the optimal parameter vector $\mathbf{w}^*$, we can apply \SVGL to estimate Shapley values for neighbors of test nodes. 
 When comparing our proposed methods with different optimization protocols, we refer to the one optimized using our Shapley-guided method as \SVGLShapley, which differs from \SVGLAccuracy, optimized for accuracy as mentioned in Subsection \ref{subsec:implementation}.

\section{Related Work}
 \textbf{Data Valuation Methods.} Data valuation approaches based on cooperative game theory, including Data Shapley \citep{ghorbani2019data} and its variants \citep{kwon2021beta,wang2023data}, have established effective frameworks for quantifying individual data contributions. Recent innovations include learning-agnostic frameworks \citep{just2023lava} and training-free methods \citep{nohyun2022data}. For graph-specific valuation, Winter value-based methods \citep{chi2024precedence} and task-agnostic frameworks \citep{falahati2024disentangled} address the unique challenges of interconnected data. Data utility learning \citep{wang2021improving} enhances valuation efficiency by predicting model performance on data subsets without repeated training, improving methods like Shapley value calculations. 

\textbf{Graph Neural Networks.} Graph Neural Networks (GNNs) have revolutionized graph-structured data analysis since the introduction of spectral convolutions \citep{bruna2013spectral}, with architectures like Graph Convolutional Networks (GCN) \citep{kipf2016semi} and attention-based variants \citep{velivckovic2017graph} gaining wide adoption. Central to GNNs' success is the message-passing mechanism \citep{xu2018powerful}, enabling effective information propagation across the graph. While traditionally viewed as crucial for both training and inference, recent research highlights its particular importance during testing. The introduction of PMLPs \citep{yang2022graph}, which are identical to standard MLPs in training but adopt GNN's architecture with message passing in testing, reveals the critical role of testing-time structure in graph-based models' performance. 

\textbf{Model Evaluation Without Labels.} Several approaches have been developed for model evaluation without test labels. Label-free model evaluation methods (see Appendix~\ref{app:baseline_methods}) estimate model accuracy through confidence scores and distribution metrics. ATC \citep{garg2022leveraging} learns confidence thresholds and DoC \citep{guillory2021predicting} measures confidence shifts between validation and test sets. These methods have been extended to graph domain through GNNEvaluator \citep{zheng2024gnnevaluator}, which uses discrepancy attributes to train a GCN regressor for accuracy prediction. However, these methods focus on single-accuracy estimation rather than data value assessment. Recently, retraining-based approaches (see Appendix~\ref{app:retraining_methods}) have emerged as an alternative strategy. Projection Norm \citep{yu2022predicting} predicts performance by analyzing parameter changes after pseudo-label retraining, which has been adapted to graphs through LEBED \citep{zheng2024online}. However, their computational demands make them impractical for evaluating multiple subgraph configurations required in our setting. While test-time adaptation methods (see Appendix~\ref{app:ttt_methods}) such as GTRANS \citep{jin2022empowering} and IGT3 \citep{pi4886269test} also operate on unlabeled test data, they focus on the one-time performance maximization through graph transformation or parameter adaptation. For the graph-inference data valuation problem requiring evaluation across numerous subgraph configurations, only non-retraining label-free model evaluation methods serve as suitable surrogate utility functions, as they enable efficient prediction across multiple subgraphs.

For a more detailed discussion on related work, please refer to Appendix~\ref{app:extended_related_work}.

\section{Experiments} \label{sec:experiments}
\subsection{Datasets and Experimental Setup}
We evaluated our proposed \SVGL framework on seven diverse real-world graph datasets, covering both homophily and heterophily scenarios. Our experiments focus primarily on the inductive node classification task, with additional evaluations in the transductive setting. We employed different fixed GNN models for each setting to highlight the importance of testing structures.
For a detailed description of the datasets and experimental setup, please refer to Appendix \ref{app:datasets_and_setup}.

\subsection{Evaluation Process and Metrics}
To evaluate our proposed \SVGL framework, we design a comprehensive assessment protocol consisting of utility learning and data valuation phases. During utility learning, we follow the process detailed in Algorithm \ref{alg:SGUL}, which involves generating permutations on the validation graph to learn our utility function. This standardized process is applied identically across all methods to ensure fair comparison. In the data valuation phase, we apply the learned utility functions to the test graph $\mathcal{G}_{\text{Te}}$ to predict accuracies for target nodes $\mathcal{V}_t$. These predicted accuracies serve as utility function outputs $U(\cdot)$ for calculating Shapley values following Algorithm \ref{alg:test_time_estimation}. 

To assess the quality of data values produced by \SVGL, we employ a \textbf{node dropping experiments} that targets high-value nodes in the graph. This experiment is one of the most common and widely used evaluation methods in data valuation research \citep{ghorbani2019data, jiang2023opendataval, kwon2023data, chi2024precedence}. By sequentially removing nodes ranked according to their assessed values, we can observe a \textit{node dropping performance curve}, which visually represents how the model's performance degrades as important nodes are removed. A good data valuation method should result in a curve that shows a rapid and sharp decline in model performance. This corresponds to identifying a crucial data subset, whose removal would significantly affect performance. Moreover, the decrease in performance caused by our method should not only be substantial but also persistent throughout the node-dropping process, ensuring that the importance identification is highly consistent \citep{ghorbani2019data, jiang2023opendataval, chi2024precedence}. To quantify the effectiveness of a valuation method, we use the Area Under the Curve (AUC) metric of these \textit{node dropping performance curves}. The AUC provides a single numerical value that captures the overall behavior of the node dropping process, reflecting both the initial rapid decline and the sustained performance drop. A lower AUC indicates a better valuation method, as it represents a more rapid and sustained decline in model accuracy when high-value nodes are removed. For detailed implementation of this evaluation protocol, please refer to Algorithm \ref{alg:node_dropping}.

\subsection{Baselines}
To evaluate the effectiveness of our proposed \SVGL framework, we compare it with several alternative utility learning methods. All these baselines aim to approximate the true utility function for graph inference data valuation, but differ in their specific techniques. The baselines we consider are: \textbf{i. Average Thresholded Confidence (ATC):} ATC \citep{garg2022leveraging} estimates accuracy by learning a threshold on the model's confidence scores. We implement two variants: \textbf{ATC-MC} (Maximum Confidence) and \textbf{ATC-NE} (Negative Entropy). \textbf{ii. Difference of Confidence (DoC):} DoC \citep{guillory2021predicting} measures the difference in average confidence between the validation and test sets to predict accuracy change. \textbf{iii. GNNEvaluator:} GNNEvaluator \citep{zheng2024gnnevaluator} is a novel method designed to assess GNN performance on unseen graphs without labels. \textbf{iv. Natural Confidence-Based Baselines:} We also include two straightforward confidence-based baselines: \textbf{(a) Maximum Confidence}: This uses the highest confidence score across all nodes and classes in the subgraph. \textbf{(b) Class Confidence}:  This calculates the average confidence score of the predicted class for each test node, using the full test graph as input. We ensure a fair comparison by using the same overall framework for all methods, including identical sampling procedures for validation and test permutations. For a complete description of those methods and corresponding training process, please see Appendix \ref{app:label_free_baselines} and Appendix \ref{app:feature_extraction_implementation} respectively.

\subsection{Main Results and Analysis}
To evaluate the effectiveness of our proposed \SVGL framework, we conducted comprehensive experiments on various graph datasets using both SGC and GCN models in the inductive and Transductive setting. Our analysis focuses on two key aspects: the detailed behavior of node removal across different datasets and the overall performance of \SVGL compared to baselines.
\subsubsection{Inductive Node Removal Analysis}
To provide a nuanced understanding of how different methods perform as nodes are progressively removed, we present accuracy curves for node dropping experiments. Figures \ref{fig:sgc_results} and \ref{fig:gcn_results} show these curves for SGC and GCN models, respectively, across various datasets.
\begin{figure}[ht]
\centering
\includegraphics[width=0.80\textwidth]{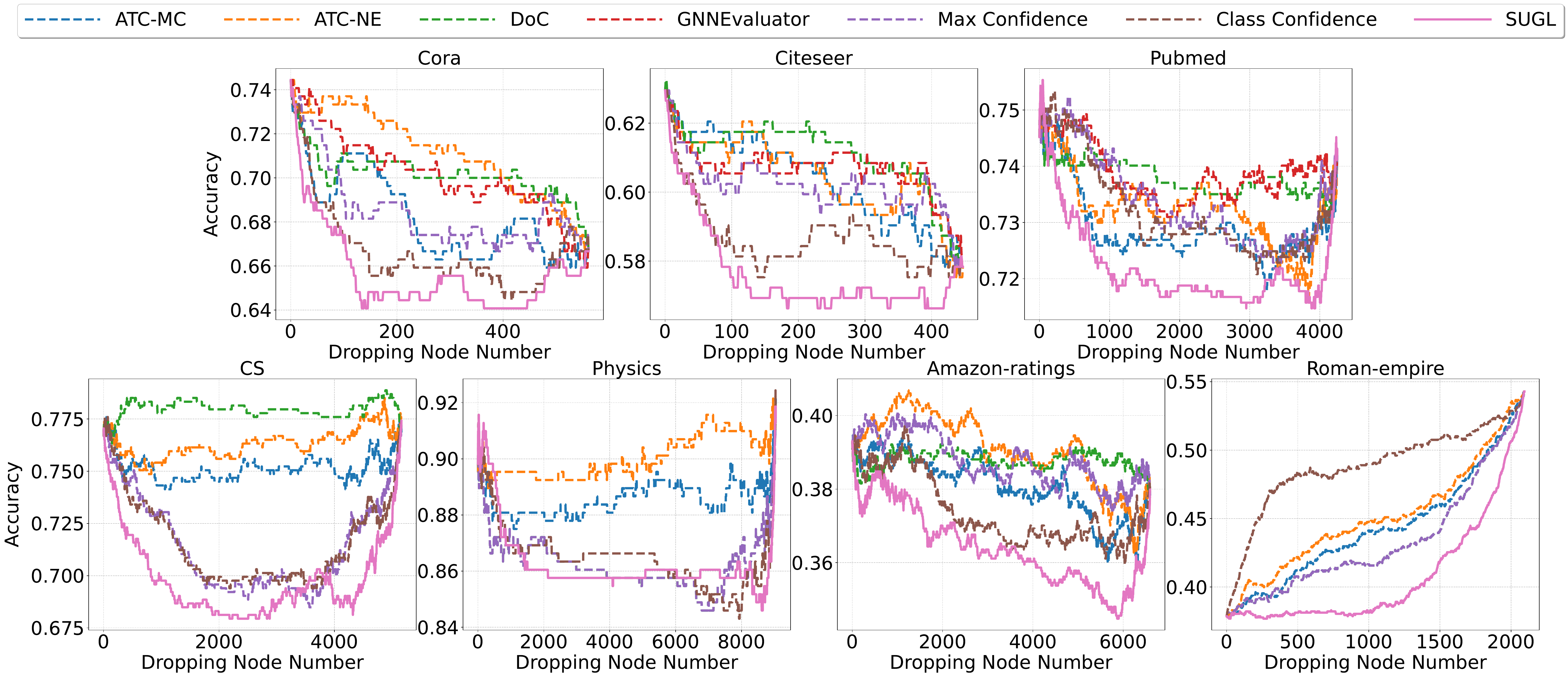}
\caption{\small Accuracy curves for node dropping experiments using the SGC model on various datasets in the inductive setting. Our proposed \SVGL method consistently maintains higher accuracy as nodes are removed, indicating its effectiveness in identifying important nodes. Note that GNNEvaluator is not shown for the larger datasets due to Out of Memory (OOM) errors.}
\label{fig:sgc_results}
\end{figure}
Figure \ref{fig:sgc_results} illustrates the accuracy curves for node dropping experiments using SGC models across various datasets. These results reveal two key characteristics of our \SVGL method. Firstly, \SVGL consistently demonstrates a steeper initial drop in accuracy when removing the first few nodes, particularly evident in datasets like Cora and Citeseer. This rapid initial decline indicates \SVGL's superior ability to identify the most critical nodes in the graph structure. Secondly, as more nodes are removed, \SVGL maintains a more stable accuracy curve compared to other methods. This long-term stability is especially pronounced in larger datasets such as CS and Physics, suggesting a more robust and consistent ranking of node importance throughout the removal process.
The performance of \SVGL varies across datasets, showcasing its adaptability to different graph structures. For instance, in the Cora dataset, \SVGL exhibits a clear advantage throughout the entire node removal process. In the Amazon-ratings dataset, which presents a more challenging heterophily scenario, \SVGL consistently outperforms other methods, particularly in the latter stages of node removal. Similar performance patterns are observed when using GCN models, demonstrating that \SVGL's effectiveness is not limited to a specific GNN architecture. For detailed results of the GCN experiments, please refer to Appendix \ref{app:gcn_results}. Additionally, to demonstrate scalability on much larger graphs, we conducted experiments on the ogbn-arxiv dataset, with results showing consistent improvements over baselines (see Appendix \ref{app:ogb_arxiv}).

\subsubsection{Transductive Node Removal Analysis}
To further validate the effectiveness of our \SVGL framework, we extended our evaluation to the transductive setting using the Cora, Citeseer, and Pubmed datasets. This setting allows us to assess how our method performs when the entire graph structure is known during both training and inference. 
Our results show that \SVGL consistently maintains higher accuracy as nodes are removed across all datasets for both SGC and GCN models. This performance is particularly notable in the Pubmed dataset, where \SVGL shows a significant advantage over other methods throughout the node removal process. For a detailed analysis of the transductive setting experiments, including accuracy curves for node dropping experiments and AUC results, please refer to Appendix \ref{app:transductive_analysis}.

\subsection{Ablation Studies}
To further validate the effectiveness of our proposed Shapley-guided approach and to provide deeper insights into its performance, we conducted a series of ablation studies. These studies aim to compare our Shapley-guided method (\SVGLShapley) with an accuracy-based optimization approach (\SVGLAccuracy) across different aspects of performance and efficiency.

\subsubsection{In-Sample Error Comparison of Shapley and Accuracy Optimization}
To empirically validate our theoretical arguments from Section \ref{sec:method}, we compared the performance of Shapley-guided optimization (\SVGLShapley) against accuracy-based optimization (\SVGLAccuracy). We conducted in-sample Mean Squared Error (MSE) comparisons across various datasets in the inductive setting with the SGC model. Our results show that \SVGLShapley consistently achieves lower MSE across all datasets, with statistically significant differences (p < 0.05). These findings provide strong empirical support for our theoretical analysis, demonstrating that directly optimizing for Shapley values leads to more accurate estimation of data value compared to the accuracy-based approach. For a detailed description of the experimental setup and full results, please refer to Appendix \ref{app:in_sample_error_comparison}.

\subsubsection{Dropping Node Comparison of Optimization Objectives}
We conducted an ablation study comparing Shapley value optimization (\SVGLShapley) and accuracy-based optimization (\SVGLAccuracy) in node dropping experiments. Our results show that \SVGLShapley generally outperforms \SVGLAccuracy across most datasets and models, achieving better results in 10 out of 14 dataset-model combinations. The improvement is particularly noticeable for larger and more complex datasets such as CS, Physics, and Amazon-ratings. These findings demonstrate that directly optimizing for Shapley values leads to more accurate and robust inference data valuation.
For detailed experimental setup and full results, please refer to Appendix \ref{app:dropping_node_comparison}.

\subsubsection{Efficiency Analysis of Shapley and Accuracy Optimization}
To complement our effectiveness analysis, we also evaluated the computational efficiency of Shapley value optimization (\SVGLShapley) compared to accuracy-based optimization (\SVGLAccuracy). This analysis helps validate the practical applicability of our method, especially for large-scale graph applications.
We recorded the fitting time and memory usage for both methods under the OLS setting described in the In-Sample Error Comparison. The implementation used PyTorch with a learning rate of 0.001 for both methods. We ran each fitting process 10 times and averaged the results, with each run consisting of 1000 epochs.
Table \ref{tab:efficiency_comparison} summarizes our findings:

\begin{table}[ht]
\centering

\begin{adjustbox}{width=0.75\textwidth,center}
\begin{tabular}{llcccc}
\toprule
\textbf{Setting} & \textbf{Dataset} & \multicolumn{2}{c}{\textbf{Shapley Optimization}} & \multicolumn{2}{c}{\textbf{Accuracy Optimization}} \\
\cmidrule(lr){3-4} \cmidrule(lr){5-6}
& & Time (s) & Memory (MB) & Time (s) & Memory (MB) \\
\midrule
\multirow{7}{*}{Inductive} & Cora & \textbf{0.63} & \textbf{16.28} & 0.66 & 29.58 \\
& Citeseer & \textbf{0.64} & \textbf{16.28} & 1.54 & 28.59 \\
& Pubmed & \textbf{0.56} & \textbf{16.49} & 0.67 & 87.60 \\
& CS & 0.52 & \textbf{16.53} & \textbf{0.50} & 44.19 \\
& Physics & \textbf{0.44} & \textbf{16.71} & 0.55 & 39.05 \\
& Amazon-ratings & \textbf{0.64} & \textbf{16.59} & 0.67 & 115.62 \\
& Roman-empire & \textbf{0.53} & \textbf{16.37} & 0.65 & 50.25 \\
\midrule
\multirow{3}{*}{Transductive} & Cora & \textbf{0.59} & \textbf{16.35} & 0.75 & 65.77 \\
& Citeseer & \textbf{0.51} & \textbf{16.35} & 0.59 & 61.28 \\
& Pubmed & \textbf{0.53} & \textbf{16.81} & 0.67 & 72.22 \\
\bottomrule
\end{tabular}
\end{adjustbox}
\caption{\small Comparison of training time and peak memory usage between Shapley-guided (\SVGLShapley) and accuracy-based (\SVGLAccuracy) optimization approaches using the SGC model.}
\label{tab:efficiency_comparison}
\end{table}

The results demonstrate that \SVGLShapley consistently outperforms \SVGLAccuracy in terms of computational efficiency. \SVGLShapley achieves faster training times for most datasets, with notable improvements for larger datasets like Citeseer and Pubmed. More significantly, \SVGLShapley shows a substantial reduction in memory usage across all datasets, often using less than half the memory required by \SVGLAccuracy. This efficiency advantage is particularly pronounced for larger datasets such as Amazon-ratings, where \SVGLShapley uses only about 14\% of the memory consumed by \SVGLAccuracy.

These findings indicate that \SVGLShapley not only provides more accurate node importance estimations but also offers significant computational benefits. The reduced memory footprint and faster training times make \SVGLShapley particularly suitable for large-scale graph applications where resource efficiency is crucial. This combination of effectiveness and efficiency underscores the practical value of our Shapley-guided approach in real-world graph inference data valuation tasks.

\section{Conclusion}
This paper introduces Shapley-Guided Utility Learning (\SVGL), a pioneering framework for valuing graph inference data without test labels. \SVGL uniquely integrates transferable feature extraction with Shapley-guided optimization, addressing the challenges of generalization to unseen structures and computational efficiency in graph data valuation. Our comprehensive experiments across diverse datasets consistently demonstrate \SVGL's superiority over existing methods in both inductive and transductive settings.

\section*{Acknowledgments}
The research is supported by the National Science Foundation (NSF) under grant numbers NSF2406647 and NSF-2406648.
\bibliography{reference}
\bibliographystyle{iclr2025_conference}

\appendix
\section{Selected Features for Accuracy Prediction} \label{app:features}
When performing Graph Inference Data Valuation, we incrementally add neighboring nodes to the target test nodes following permissible permutations. This process generates a series of test subgraphs. Let $\mathcal{G}_{\text{sub}} = (\mathcal{V}_{\text{sub}}, \mathcal{E}_{\text{sub}}, \mathbf{X}_{\text{sub}})$ denote the current sub-test graph  , which includes the added neighboring nodes and target test nodes. Our goal is to identify features that reflect the performance of the fixed GNN model on $\mathcal{G}_{\text{sub}}$ without relying on true labels. To this end, we propose a set of data-specific and model-specific features that correlate with GNN performance. These features are designed to capture both the structural properties of the graph and the behavior of the GNN model on the test nodes.

\textbf{Data-specific features:}
Our data-specific features focus on graph homophily and distance measures:
\begin{enumerate}

\item \textbf{Edge Cosine Similarity} ($\bar{s}_e$): This feature measures the homophily level of the current subgraph. For graph neural networks, it is recognized that performance is strongly correlated with the graph's homophily level, where connected nodes tend to share similar characteristics \citep{zhu2020beyond,li2024metadata}. A higher homophily level often leads to better GNN performance. 

\begin{equation*}
\bar{s}_e = \frac{1}{|\mathcal{E}_{\text{sub}}|} \sum_{(i,j) \in \mathcal{E}_{\text{sub}}} \cos(\mathbf{x}_i, \mathbf{x}_j)
\end{equation*}
where $\mathbf{x}_i$ and $\mathbf{x}_j$ are the feature vectors of nodes $i$ and $j$, respectively, and $\mathcal{E}{\text{sub}}$ is the set of edges in the current subgraph. This measure accurately captures the average cosine similarity between connected nodes, reflecting the homophily of the subgraph structure.

\item \textbf{Representation Distance} ($d_{\text{rep}}$): This feature quantifies the overall dissimilarity between test nodes and the training set, which has been shown to be related to GNN performance \citep{ma2021subgroup}. A smaller feature distance typically indicates that the test nodes are more similar to the training data, which often correlates with better GNN performance on these test nodes.

\begin{equation*}
d_{\text{rep}} = \frac{1}{|\mathcal{V}_{t}|} \sum_{v \in \mathcal{V}_{t}} \cos(\mathbf{h}_v, \bar{\mathbf{h}}_{\text{train}})
\end{equation*}
where $\mathbf{h}_v$ is the aggregated feature vector of the target test node $v$, and $\bar{\mathbf{h}}_{\text{train}}$ is the mean aggregated feature vector of all training nodes.

\item \textbf{Classwise Representation Distance} ($d_{\text{class}}$): This feature provides a more fine-grained measure of the distance between test nodes and training data, by considering class-specific prototypes. It captures how well the test nodes align with the class representations learned from the training data.
\begin{equation*}
d_{\text{class}} = \frac{1}{|\mathcal{V}_{t}|} \sum_{v \in \mathcal{V}_{t}} \min_{c \in \mathcal{C}} \cos(\mathbf{h}_v, \bar{\mathbf{h}}_c)
\end{equation*}
where $\mathcal{C}$ is the set of classes, $\bar{\mathbf{h}}_c$ is the mean aggregated feature vector of training nodes in class $c$, and $\mathcal{V}_{t}$ is the set of target test nodes. This feature complements the overall representation distance by providing class-specific information, which can be particularly useful in multi-class classification tasks.

\end{enumerate}

\textbf{Model-specific features:}
Our model-specific features leverage the output of the fixed GNN model $f(\cdot)$ to compute confidence scores and uncertainty measures:

\begin{enumerate}[resume]
\item \textbf{Maximum Predicted Confidence} ($c_{\text{max}}$): The confidence of model predictions has been shown to be strongly correlated with model accuracy \citep{guo2017calibration,guillory2021predicting,garg2022leveraging}. This feature captures the average maximum confidence score for target test nodes when the current subgraph is input to the fixed GNN model.
\begin{equation*}
c_{\text{max}} = \frac{1}{|\mathcal{V}_t|} \sum_{v \in \mathcal{V}_t} \max_{y \in \mathcal{Y}} f_y(\mathcal{G}_{\text{sub}})_v
\end{equation*}
\item \textbf{Target Class Confidence} ($c_{\text{target}}$): To mitigate the impact of changing predictions as we add nodes to the test subgraph, we fix the predicted label for each node based on the full test graph. This feature represents the average confidence score of these fixed predicted classes for the target test nodes.
\begin{equation*}
c_{\text{target}} = \frac{1}{|\mathcal{V}_t|} \sum_{v \in \mathcal{V}_t} f_{\hat{y}_v}(\mathcal{G}_{\text{sub}})_v
\end{equation*}
where $\hat{y}_v = \arg\max_{y \in \mathcal{Y}} f_y(\mathcal{G}_{\text{Te}})_v$ is the predicted class for node $v$ using the full test graph $\mathcal{G}_{\text{Te}}$.

\item \textbf{Propagated Maximum Confidence} ($c_{\text{prop\_max}}$): Label propagation \citep{zhu2002learning} has proven effective in graph-based classification tasks. We first predict a distribution for each node in the current subgraph using only their features, then propagate these distributions over the graph structure.
\begin{equation*}
c_{\text{prop\_max}} = \frac{1}{|\mathcal{V}_t|} \sum_{v \in \mathcal{V}_t} \max_{y \in \mathcal{Y}} \tilde{f}_y(\mathcal{G}_{\text{sub}})_v
\end{equation*}
where $\tilde{f}_y(\mathcal{G}_{\text{sub}})_v = \text{LP}(f_y(\mathcal{G}_{\text{sub}}^{\emptyset}))_v$, $\mathcal{G}_{\text{sub}}^{\emptyset} = (\mathcal{V}_{\text{sub}}, \emptyset, \mathbf{X}_{\text{sub}})$ is the subgraph with empty edge set, $f_y(\mathcal{G}_{\text{sub}}^{\emptyset})_v$ is the initial prediction for node $v$ using only its features, and $\text{LP}(\cdot)$ denotes the label propagation operation on $\mathcal{G}_{\text{sub}}$.

\item \textbf{Propagated Target Confidence} ($c_{\text{prop\_target}}$): This feature captures the average confidence of the fixed predicted class after label propagation, using the same process as in Propagated Maximum Confidence.
\begin{equation*}
c_{\text{prop\_target}} = \frac{1}{|\mathcal{V}_t|} \sum_{v \in \mathcal{V}_t} \tilde{f}_{\hat{y}_v}(\mathcal{G}_{\text{sub}})_v
\end{equation*}
where $\hat{y}_v$ is the predicted class for node $v$ using the full test graph, as defined earlier, and $\tilde{f}_{\hat{y}_v}(\mathcal{G}_{\text{sub}})_v$ is the propagated probability for class $\hat{y}_v$ of node $v$.

\item \textbf{Negative Entropy} ($H_{\text{neg}}$): Recent research has shown that negative entropy effectively measures uncertainty in machine learning models and is highly correlated with accuracy \citep{sensoy2018evidential, garg2022leveraging}. Higher negative entropy indicates more certain predictions.
\begin{equation*}
H_{\text{neg}} = - \frac{1}{|\mathcal{V}_t|} \sum_{v \in \mathcal{V}_t} \sum_{y \in \mathcal{Y}} f_y(\mathcal{G}_{\text{sub}})_v \log f_y(\mathcal{G}_{\text{sub}})_v
\end{equation*}

\item \textbf{Confidence Gap} ($\Delta c$): This feature measures the average difference between the highest and second-highest confidence scores for the target test nodes. It can be viewed as another uncertainty measure, where a larger confidence gap suggests more certain predictions and potentially higher accuracy.
\begin{equation*}
\Delta c = \frac{1}{|\mathcal{V}_t|} \sum_{v \in \mathcal{V}_t} \left( \max_{y \in \mathcal{Y}} f_y(\mathcal{G}_{\text{sub}})_v - \max_{y' \in \mathcal{Y} \setminus \{\hat{y}_v\}} f_{y'}(\mathcal{G}_{\text{sub}})_v \right)
\end{equation*}
where $\hat{y}_v = \arg\max_{y \in \mathcal{Y}} f_y(\mathcal{G}_{\text{sub}})_v$ is the predicted class for node $v$.
\end{enumerate}

These model-specific features, combined with the previously described data-specific features, provide a comprehensive representation of the GNN's performance on the sub-test graph $\mathcal{G}_{\text{sub}}$ without relying on true labels. By capturing various aspects of model confidence and uncertainty, these features serve as effective proxies for model accuracy in our utility learning framework.

\section{\SVGLAccuracy and Baseline Model Training Process} \label{app:feature_extraction_implementation}

Our proposed transferable feature extraction method \SVGLAccuracy addresses the challenge of lacking test labels by providing a set of proxy features that capture both graph structure and model behavior. To implement this approach in the context of utility learning for graph inference data valuation, we follow a general procedure:

\begin{enumerate}
    \item \textbf{Generate permutations:} We create a set of permutations on the validation graph $\mathcal{G}_{\text{Val}}$, following the structure-aware Shapley value formulation (Equation \eqref{eq:data_value_graph}). Each permutation results in a series of subgraphs as neighboring nodes are incrementally added.
    
    \item \textbf{Extract features:} For each subgraph, we apply our feature extractor $g(S)$ to obtain the transferable, performance-related features $\mathbf{x} \in \mathbb{R}^d$.
    
    \item \textbf{Obtain ground truth utilities:} We input these validation subgraphs into the fixed GNN model $f(\cdot)$ to obtain ground truth accuracy scores. These scores serve as the target values for our utility learning method.
    
    \item \textbf{Learn utility function:} Traditional approaches typically use the feature-utility pairs obtained from steps 2 and 3 to train a model that minimizes the difference between predicted and true utility values (or accuracies). This is often formulated as a regression problem:
    \[
    \min_{\theta} \sum_{S} (U(S) - h_{\theta}(g(S)))^2
    \]
    where $U(S)$ is the ground truth utility (accuracy) for subgraph $S$, $g(S)$ is our feature extractor, and $h_{\theta}$ is a learnable function parameterized by $\theta$.
\end{enumerate}

This procedure allows us to create a dataset of feature-utility pairs, which can be used to learn a utility function that generalizes to unseen test structures. It's worth noting that this general framework can accommodate various transferable feature extraction methods. While we have proposed a specific design for our feature extractor $g(S)$, other approaches could potentially be integrated into this framework, as long as they produce transferable, performance-related features $\mathbf{x} \in \mathbb{R}^d$.

  Interestingly, this training procedure can be identically applied to label-free model evaluation methods \citep{garg2022leveraging, guillory2021predicting, zheng2024gnnevaluator} which serve as baselines in our experimental part. These methods aim to assess model performance on unlabeled data by leveraging various proxy metrics or transferable features. The key difference lies in the ultimate goal: while label-free model evaluation methods focus on estimating overall model performance, our approach extends this concept to the more granular task of valuing individual graph elements for inference.

\section{Proof of Shapley Value Decomposition Theorem}
\label{app:theorem_proof}

Here we provide the proof for the Shapley Value Decomposition Theorem stated in Section \ref{sec:method}.

\begin{theorem}[Shapley Value Decomposition]
Given a linear utility function $U(S) = \mathbf{w}^\top \mathbf{x}(S)$, where $\mathbf{w} \in \mathbb{R}^d$ is a parameter vector and $\mathbf{x}(S) \in \mathbb{R}^d$ is a feature vector representing subset $S$, the Shapley value of player $i$ with respect to $U$ can be expressed as a linear combination of Feature Shapley Values:
\begin{equation*}
\phi_i(U) = \mathbf{w}^\top \boldsymbol{\psi}_i
\end{equation*}
where $\boldsymbol{\psi}_i = [\phi_i(U_1), \phi_i(U_2), \ldots, \phi_i(U_d)]^\top$ is the vector of Feature Shapley Values, and $U_k(S) = x_k(S)$ is the utility function considering only the $k$-th feature.
\end{theorem}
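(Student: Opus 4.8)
The plan is to prove the identity by direct substitution into the Shapley value formula and then exploiting its linearity in the utility function. First I would write down the (Structure-Aware) Shapley value of player $i$ from Equation~\eqref{eq:data_value_graph}, namely the permutation-average of marginal contributions $U(\mathcal{N}_i^\pi(\mathcal{V}_t) \cup \{i\}) - U(\mathcal{N}_i^\pi(\mathcal{V}_t))$ over the permissible permutation set $\Omega(\mathcal{N}(\mathcal{V}_t))$. I would then substitute the hypothesis $U(S) = \mathbf{w}^\top \mathbf{x}(S) = \sum_{k=1}^d w_k\, x_k(S)$ into both evaluations of $U$ inside the bracket, so that each marginal contribution becomes $\sum_{k=1}^d w_k \big( x_k(\mathcal{N}_i^\pi \cup \{i\}) - x_k(\mathcal{N}_i^\pi) \big)$, where I write $\mathcal{N}_i^\pi$ for $\mathcal{N}_i^\pi(\mathcal{V}_t)$.

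The key step is then to interchange the order of summation: the outer normalization $\tfrac{1}{|\Omega|}$ and the sum over $\pi \in \Omega$ are linear operations that commute with the finite sum over the feature index $k$, so I can pull $w_k$ outside and obtain $\phi_i(U) = \sum_{k=1}^d w_k \cdot \Big[ \tfrac{1}{|\Omega|} \sum_{\pi \in \Omega} \big( x_k(\mathcal{N}_i^\pi \cup \{i\}) - x_k(\mathcal{N}_i^\pi) \big) \Big]$. Finally I would recognize the bracketed quantity: with $U_k(S) := x_k(S)$, it is exactly the Structure-Aware Shapley value $\phi_i(U_k)$ of player $i$ under the single-feature utility. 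Hence $\phi_i(U) = \sum_{k=1}^d w_k\, \phi_i(U_k) = \mathbf{w}^\top \boldsymbol{\psi}_i$, which is the claim.

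The one point that genuinely requires care — and the only real obstacle — is ensuring that the permissible permutation set $\Omega(\mathcal{N}(\mathcal{V}_t))$ is the \emph{same} object for $U$ and for every single-feature utility $U_k$. This holds because $\Omega$ is determined solely by the graph connectivity constraints and is independent of which utility function is being evaluated; it is precisely this fact that licenses the term-by-term matching in the last step. Everything else is the standard additivity-plus-scale-invariance (linearity) property of the Shapley value, instantiated here for the constrained-permutation variant, and involves no nontrivial estimates.
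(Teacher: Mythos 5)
Your proof is correct and follows essentially the same route as the paper's: substitute the linear utility $U(S)=\mathbf{w}^\top\mathbf{x}(S)$ into the Shapley formula, use linearity to interchange the sum over features with the average over coalitions/permutations, and identify each resulting bracketed term as the Feature Shapley value $\phi_i(U_k)$. The only cosmetic difference is that you work directly with the constrained permutation-average form of the structure-aware Shapley value (and rightly note that $\Omega$ is determined by the graph alone, hence identical for $U$ and every $U_k$), whereas the paper's appendix proof writes the same linearity argument using the classical subset-weighted Shapley formula.
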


\begin{proof}
We begin by recalling the definition of the Shapley value for a player $i$ with respect to a utility function $U$:

\begin{equation*}
\phi_i(U) = \sum_{S \subseteq \mathcal{N} \setminus \{i\}} \frac{|S|!(n - |S| - 1)!}{n!} [U(S \cup \{i\}) - U(S)]
\end{equation*}

where $\mathcal{N}$ is the set of all players and $n = |\mathcal{N}|$.

Given the linear utility function $U(S) = \mathbf{w}^\top \mathbf{x}(S)$, we can substitute this into the Shapley value definition:

\begin{align*}
\phi_i(U) &= \sum_{S \subseteq \mathcal{N} \setminus \{i\}} \frac{|S|!(n - |S| - 1)!}{n!} [\mathbf{w}^\top \mathbf{x}(S \cup \{i\}) - \mathbf{w}^\top \mathbf{x}(S)] \\
&= \sum_{S \subseteq \mathcal{N} \setminus \{i\}} \frac{|S|!(n - |S| - 1)!}{n!} \mathbf{w}^\top [\mathbf{x}(S \cup \{i\}) - \mathbf{x}(S)] \\
&= \mathbf{w}^\top \sum_{S \subseteq \mathcal{N} \setminus \{i\}} \frac{|S|!(n - |S| - 1)!}{n!} [\mathbf{x}(S \cup \{i\}) - \mathbf{x}(S)]
\end{align*}

Now, let's consider the $k$-th component of the feature vector $\mathbf{x}(S)$, which we denote as $x_k(S)$. We can define a utility function $U_k(S) = x_k(S)$ that considers only this $k$-th feature. The Shapley value for player $i$ with respect to $U_k$ is:

\begin{equation*}
\phi_i(U_k) = \sum_{S \subseteq \mathcal{N} \setminus \{i\}} \frac{|S|!(n - |S| - 1)!}{n!} [x_k(S \cup \{i\}) - x_k(S)]
\end{equation*}

Comparing this with the last line of our previous derivation, we can see that:

\begin{equation*}
\phi_i(U) = \mathbf{w}^\top [\phi_i(U_1), \phi_i(U_2), \ldots, \phi_i(U_d)]^\top = \mathbf{w}^\top \boldsymbol{\psi}_i
\end{equation*}

where $\boldsymbol{\psi}_i = [\phi_i(U_1), \phi_i(U_2), \ldots, \phi_i(U_d)]^\top$ is the vector of Feature Shapley Values.

This completes the proof of the Shapley Value Decomposition Theorem.
\end{proof}

This theorem demonstrates that for a linear utility function, the Shapley value can be decomposed into a linear combination of Feature Shapley Values derived from Appendix~\ref{app:features}. This decomposition forms the theoretical foundation for our Shapley-Guided Generalizable Utility Learning (\SVGL) method, allowing us to efficiently learn utility function parameters by optimizing for Shapley values directly.

\section{Extended Related Work}\label{app:extended_related_work}
\subsection{Data-Efficient Learning on Graphs}
Data-efficient learning on graphs primarily uses two approaches to address limited labeled data. Graph self-supervised learning develops representations without labels, where contrastive learning methods like Deep Graph Infomax~\citep{velickovic2019deep} and GRACE~\citep{zhu2020deep} contrast different graph views, while non-contrastive methods such as BGRL~\citep{thakoor2021bootstrapped} and Graph Barlow Twins~\citep{bielak2022graph} achieve strong performance without negative samples. Recent works address sampling bias~\citep{zhao2021graph, xia2022progcl} and develop feature augmentation techniques~\cite{zhang2022costa, zhang2023spectral}. \cite{chi2024enhancing} enhance contrastive learning through node similarity while Ma et al.~\citep{ma2024overcoming} establish comprehensive benchmarks for evaluation. Meanwhile, graph active learning optimizes node selection for labeling, with methods like AGE~\citep{cai2017active} combining multiple selection metrics, GPA~\citep{hu2020graph} using sequential decision-making, and GRAIN~\citep{zhang2021grain}/RIM~\citep{zhang2021rim} reformulating selection as influence maximization. Advanced techniques include LSCALE~\citep{liu2022lscale} exploiting labeled and unlabeled representations, ALG~\citep{zhang2021alg} considering both representativeness and informativeness, and GALclean \citep{chi2024active} addressing active learning for graphs with noisy structures. Recent works have also explored uncertainty quantification on graphs, with JuryGCN~\citep{kang2022jurygcn} providing deterministic uncertainty estimates through jackknife confidence intervals and \cite{fuchsgruber2024uncertainty} establishing principled approaches to uncertainty sampling for active learning on graphs. Another promising direction is graph condensation, which aims to distill large graphs into smaller synthetic versions that preserve training performance. \cite{jin2021graph} introduce this problem by matching GNN training trajectories through gradient matching, while their follow-up work~\citep{jin2022condensing} accelerates the process with one-step gradient matching. Recent advances by Gong et al.~\citep{gong2025scalable} address scalability challenges for evolving graph data through class-wise clustering on aggregated features, achieving significant speedups while maintaining comparable performance. These data-efficient approaches complement graph inference data valuation, extending efficiency principles from training to the test-time inference phase.

\subsection{Graph Training Data Valuation}
\label{app:graph_training_valuation}
While Data Shapley and subsequent methods \citep{ghorbani2019data, kwon2021beta, wang2023data} have been effective for i.i.d. data, they face significant challenges when applied to graph-structured data. A key challenge is capturing the hierarchical and dependent relationships among graph elements. In Graph Neural Networks (GNNs), a node's contribution to model performance is intricately linked to its position within the computation tree and its relationships with other nodes. Traditional Shapley value calculations, which treat all players (nodes) independently, fail to account for these crucial dependencies, potentially leading to inaccurate valuations.

To address these challenges, \citet{chi2024precedence} proposed a more granular approach to graph data valuation. This approach considers individual nodes within the computation tree as the basic units for valuation, allowing for a more nuanced assessment of each element's contribution to GNN performance. Building upon the Shapley value formulation, this method introduces two key constraints to the set of permutations $\Pi(\mathcal{D})$:

\textbf{Level Constraint}: This constraint ensures that nodes within the same subtree of the computation graph are grouped together in the permutation. Formally, for a node $v$ in the computation tree and its descendants $\mathcal{D}(v)$, the constraint can be expressed as:
\[
|\pi[i] - \pi[j]| \leq |\mathcal{D}(v)|, \quad \forall i,j \in \mathcal{D}(v) \cup \{v\}
\]
where $\pi[i]$ denotes the position of node $i$ in permutation $\pi$. This preserves the hierarchical structure of the computational graph and prevents evaluation bias that could occur when players from the same group are placed at widely separated positions in the permutation.

\textbf{Precedence Constraint}: This constraint guarantees that a node appears in the permutation only after its ancestors. For a node $v$ and its ancestor set $\mathcal{A}(v)$, the constraint can be formulated as:
\[
\pi[a] < \pi[v], \quad \forall a \in \mathcal{A}(v)
\]
This reflects the dependency structure in GNNs, where a node's contribution is contingent on the presence of its ancestors in the computation tree.

These constraints allow for a more accurate valuation of graph data by respecting the inherent structure and dependencies within GNNs.

\subsubsection{Discussion on the Difference between PC-Winter and \SVGL}

While PC-Winter pioneered the exploration of graph data valuation by introducing constraints to capture hierarchical dependencies, our work focuses specifically on the challenging scenario of test-time graph inference valuation, where ground truth labels are unavailable. Specifically, PC-Winter addresses training data valuation by defining hierarchical elements within computation trees as the data valuation objects (players), applying both Level and Precedence Constraints to capture structural dependencies. In contrast, our work (Section~\ref{sec:method}) focuses on quantifying the importance of neighbors for test nodes during inference time. We adopt the Precedence Constraint from PC-Winter while omitting the Level Constraint, as explained in Section~\ref{sec:design_choice}. This design choice reflects the distinct nature of test-time neighbor relationships, which lack the clear hierarchical groupings present in training data computation trees. The Precedence Constraint proves valuable in capturing the dependencies between nodes in the message-passing process during inference.

A key technical distinction lies in our approach to utility function design. While PC-Winter leverages validation accuracy as their utility measure, the absence of test labels in our setting necessitates a novel solution. As detailed in Section~\ref{sec:utility_design}, we introduce transferable data-specific and model-specific features that can effectively approximate model performance without ground truth labels. This innovation enables the evaluation of neighbor importance during inference time.

Our work complements PC-Winter by extending graph data valuation to test-time scenarios, particularly crucial for applications like real-time recommendation systems and dynamic graphs where test-time structure evaluation is essential. Table~\ref{tab:comparison_pc_winter} highlights the key differences between our work and PC-Winter:

\begin{table}[!h]
\centering
\caption{Comparison between PC-Winter and our Structure-aware Shapley Value with \SVGL}
\label{tab:comparison_pc_winter}
\begin{tabular}{l|l|l}
\toprule
\textbf{Aspect} & \textbf{PC-Winter} & \textbf{Structure-aware Shapley with \SVGL} \\
\midrule
\textbf{Valuation Target} & Training graph elements & Test-time neighbors \\
\textbf{Constraints Used} & Level and Precedence Constraints & Precedence only \\
\textbf{Primary Challenge} & Hierarchical dependencies & No test labels \\
\textbf{Utility Function} & Validation accuracy & Learned test accuracy \\
\bottomrule
\end{tabular}
\end{table}

In our experimental analysis, we further demonstrate the effectiveness of our approach in identifying influential test-time graph structures. Through comprehensive node-dropping experiments (Section~\ref{sec:experiments}), we show that our method consistently outperforms baselines in terms of Area Under the Curve (AUC) scores and accuracy curve characteristics across various datasets and model architectures. These results validate the practical value of our test-time graph inference valuation framework, complementing the contributions of PC-Winter in the training data valuation setting.

\subsection{Label Free Model Evaluation Baselines}
\label{app:label_free_baselines}
In this section, we provide detailed descriptions of the baselines used for comparison in our study. These baselines represent state-of-the-art methods for label-free model evaluation, which is crucial for assessing model performance on unseen data without access to ground truth labels.
The baselines we consider are:

\subsubsection{Average Thresholded Confidence (ATC)}
ATC, introduced by \citet{garg2022leveraging}, estimates accuracy by learning a threshold on the model's confidence scores. We implement two variants:

\begin{enumerate}[label=(\alph*)]
    \item \textbf{ATC-MC (Maximum Confidence):}
    \begin{equation*}
        \text{ATC-MC} = \frac{1}{|\mathcal{V}_{\text{test}}|} \sum_{v \in \mathcal{V}_{\text{test}}} \mathbb{I}\left[\max_{c \in \mathcal{C}} f_\theta(G_{\text{test}})_{v,c} > t\right]
    \end{equation*}
    This equation counts the fraction of test nodes where the maximum confidence exceeds a threshold $t$. Here, $f_\theta(G_{\text{test}})_{v,c}$ is the confidence score for class $c$ on test node $v$, and $t$ is determined using the validation set.

    \item \textbf{ATC-NE (Negative Entropy):}
    \begin{equation*}
        \text{ATC-NE} = \frac{1}{|\mathcal{V}_{\text{test}}|} \sum_{v \in \mathcal{V}_{\text{test}}} \mathbb{I}\left[-\sum_{c \in \mathcal{C}} f_\theta(G_{\text{test}})_{v,c} \log f_\theta(G_{\text{test}})_{v,c} > t\right]
    \end{equation*}
    This variant uses negative entropy of predicted probabilities as the confidence measure, counting nodes where it exceeds the threshold.
\end{enumerate}

\subsubsection{Difference of Confidence (DoC)}
DoC, proposed by \citet{guillory2021predicting}, measures the difference in average confidence between the validation and test sets to predict accuracy change:
\begin{equation*}
\text{DoC} = \text{Acc}(f, \mathcal{G}_{\text{Val}}) + \beta \cdot (\bar{c}_{\text{Te}} - \bar{c}_{\text{Val}})
\end{equation*}
where $\bar{c}_{\text{Val}} = \frac{1}{|\mathcal{V}_{\text{Val}}|} \sum_{v \in \mathcal{V}_{\text{Val}}} \max_{c \in \mathcal{C}} f(v)_c$ and $\bar{c}_{\text{Te}} = \frac{1}{|\mathcal{V}_{\text{Te}}|} \sum_{v \in \mathcal{V}_{\text{Te}}} \max_{c \in \mathcal{C}} f(v)_c$ are the average maximum confidences on validation and test sets respectively, $\text{Acc}(f, \mathcal{G}_{\text{Val}})$ is the accuracy on the validation graph, and $\beta$ is learned through linear regression on the validation set.

\subsubsection{GNNEvaluator}
GNNEvaluator, introduced by \citet{zheng2024gnnevaluator}, is designed to assess GNN performance on unseen graphs without labels. It employs a two-stage approach:

1. It constructs a DiscGraph set, leveraging validation subgraphs from sample permutations as meta-graphs. For each meta-graph, it computes discrepancy attributes by comparing GNN embeddings and predictions between the meta-graph and original training graph.

2. A two-layer GCN regressor is trained on the DiscGraph set to estimate node classification accuracy. The regressor learns to map the discrepancy attributes to expected model performance.

For inference, GNNEvaluator computes discrepancy attributes for the unseen test graph using the fixed pre-trained GNN, then applies the trained regressor to estimate accuracy without requiring labels.

\subsubsection{Natural Confidence-Based Baselines}
We also include two straightforward confidence-based baselines:

\begin{enumerate}[label=(\alph*)]
    \item \textbf{Maximum Confidence}: This uses the highest confidence score across all nodes and classes in the subgraph.
    \item \textbf{Class Confidence}: This calculates the average confidence score of the predicted class for each test node, using the full test graph as input.
\end{enumerate}

We ensure a fair comparison by using the same overall framework for all methods, including identical sampling procedures for validation and test permutations.

\subsection{Baseline Methods and Their Distinctions to \SVGL} \label{app:baseline_methods}
 Recent advances in label-free model evaluation have produced several notable approaches. Average Thresholded Confidence (ATC) \citep{garg2022leveraging} estimates model accuracy by learning appropriate thresholds on confidence scores. The method operates by computing the fraction of examples where model confidence exceeds a learned threshold, with variants utilizing either maximum confidence (ATC-MC) or negative entropy (ATC-NE) as the underlying metric. Difference of Confidence (DoC) \citep{guillory2021predicting} takes a comparative approach, measuring the discrepancy in average confidence between validation and test sets to predict accuracy changes. This method leverages the observation that shifts in model confidence often correlate with performance degradation. GNNEvaluator \citep{zheng2024gnnevaluator} introduces a more sophisticated framework specifically designed for graph neural networks, employing a two-stage approach that first constructs a DiscGraph set to capture distribution discrepancies and then trains a GCN regressor to estimate node classification accuracy without requiring labels.

While these methods represent significant advances in label-free model evaluation area, they fundamentally differ from our graph inference data valuation framework in both objectives and operational mechanisms. The primary distinction lies in the granularity and scope of evaluation. Traditional label-free methods focus on estimating overall model performance on a fixed test graph, essentially treating the evaluation as a single-point estimation problem. In contrast, \SVGL performs fine-grained analysis by evaluating numerous subgraph configurations to quantify each structural component contribution to model performance. This decomposition-based approach enables us to understand not just how well a model performs, but also which graph structures are crucial for that performance. The computational demands of our approach also create unique challenges - while methods like GNNEvaluator work well for one-time evaluation, they become computationally prohibitive when applied to the many subgraph permutations required for Shapley value computation, often encountering memory limitations on medium-sized datasets. Our \SVGL framework addresses these challenges through specialized optimization techniques and efficient feature extraction methods that enable scalable evaluation across multiple subgraph configurations. Furthermore, the architecture of existing methods is not optimized for repeated utility assessment across permutations, as they were designed for single-pass evaluation rather than the iterative valuation process required for computing structure-aware Shapley values. Through our experimental validation, we demonstrate that \SVGL not only provides more detailed structural insights but also achieves superior accuracy in estimating the importance of individual graph components, outperforming these adapted baseline methods in the specific context of graph inference data valuation.

\subsection{Retraining-based Label-Free Model Evaluation Methods} \label{app:retraining_methods}
Notably, recent label-free model evaluation researches for predicting model test performance have explored retraining-based approaches, which offer novel perspectives but face limitations in our graph inference data valuation context. Projection Norm \citep{yu2022predicting} predicts out-of-distribution performance by analyzing model parameter changes after retraining with pseudo-labels generated from test samples. The method demonstrates strong theoretical guarantees for linear models by measuring the distance between original and retrained model parameters as an indicator of distribution shift. Simultaneously, LEBED \citep{zheng2024online} approaches test-time graph distribution shifts by quantifying discrepancies in learning behaviors between training and test graphs through a GNN retraining strategy with parameter-free optimality criteria that capture both node prediction and structure reconstruction aspects.

While these methods present innovative solutions for general test accuracy prediction, they encounter fundamental constraints in our graph inference data valuation framework. The primary challenge stems from our need to evaluate $|\Omega(N(V_t))| \times |N(V_t)|$ different subgraphs for computing structure-aware Shapley values. Each subgraph configuration would demand a separate retraining process under these approaches, rendering them computationally infeasible at scale. Moreover, methods like LEBED require access to the original training graph for comparative analysis - an assumption that may not hold in practical deployment scenarios where training data access is restricted due to privacy concerns. These inherent limitations motivate our adoption of more computationally efficient approaches that can evaluate subgraph utilities without model retraining, as detailed in Section \ref{app:label_free_baselines}.

\subsection{Test-time Training and Augmentation Methods} \label{app:ttt_methods}
Recent works have explored various test-time adaptation techniques for graph neural networks to address distribution shifts and OOD generalization. These methods can be broadly categorized based on their core mechanisms and objectives: GTRANS \citep{jin2022empowering} takes a data-centric approach by performing test-time graph transformation through unsupervised feature and structure augmentation. While also focusing on data adaptation, GTRANS aims to find a single optimal modified test graph that maximizes model performance, rather than evaluating the importance of individual elements. This fundamentally differs from our objective of quantifying each node's marginal contribution through structure-aware Shapley values. GOODAT \citep{wang2024goodat} addresses test-time graph OOD detection as a binary classification problem by determining whether subgraphs align with the training distribution. The method employs a graph masker to compress informative subgraphs for distinguishing ID and OOD samples. However, GOODAT focuses solely on detecting OOD graphs without considering the contributions of individual nodes to model performance. IGT3 \citep{pi4886269test} proposes a two-stage training paradigm that combines test-time training with invariant graph learning to improve OOD generalization. The method adapts model parameters through multi-level graph contrastive learning while preserving graph structure information. In contrast to our data-centric valuation framework, IGT3 modifies the model itself to adapt to distribution shifts.

While these methods present innovative solutions for improving test-time performance and OOD generalization, they differ fundamentally from our graph inference data valuation framework in both objectives and mechanisms. Our framework maintains fixed model parameters while systematically evaluating the contribution of individual nodes, enabling flexible value-based data selection for various downstream applications from performance maximization to denoising. This generality and composability of data values distinguish our approach from methods focused solely on improving overall model performance through data or model adaptation.

\section{Detailed Datasets and Experimental Setup}
\label{app:datasets_and_setup}

\subsection{Datasets}
To evaluate the effectiveness of our proposed Shapley-Guided Utility Learning (\SVGL) framework, we conducted extensive experiments on seven diverse real-world graph datasets. These datasets include Cora, Citeseer, Pubmed \citep{sen2008collective}, Coauthor-CS, Coauthor-Physics \citep{shchur2018pitfalls}, Roman-empire, and Amazon-ratings \citep{platonov2023critical}.  Our dataset provides excellent coverage of both homophily and heterophily in graphs.  The first four datasets are characterized by homophily, where connected nodes tend to share similar features or labels. In contrast, Roman-empire and Amazon-ratings exhibit heterophily, presenting a more challenging scenario where connected nodes often have different characteristics.

\subsection{Experimental Setup}
Our experiments primarily focus on the inductive node classification task, which more closely resembles real-world applications where models must generalize to unseen graph structures \citep{hamilton2017inductive, van2022inductive}. In this setting, we partition each dataset into three distinct graph structures: Training Graph, Validation Graph, and Testing Graph.

To provide a comprehensive evaluation, we also extend our experiments to the transductive setting, where the entire graph structure is known during training, but only a subset of training and validation nodes has labeled data. This allows us to compare the performance of our method across different learning paradigms.

To better highlight the importance of testing structures, we employ different fixed GNN models for inductive and transductive settings:

\begin{itemize}
    \item \textbf{Inductive Setting:} We utilize Parameterized MLPs (PMLPs) \citep{yang2022graph}, which train as standard MLPs but adopt GNN-like message passing during inference. This approach emphasizes the crucial role of testing-time graph structures in model performance. During inference, PMLPs employ either Simplified Graph Convolutions (SGCs) or Graph Convolutional Networks (GCNs) \citep{kipf2016semi} \citep{wu2019simplifying} for message passing, allowing us to evaluate the impact of different aggregation schemes on our graph inference data valuation task.
    
    \item \textbf{Transductive Setting:} We use full Graph Neural Network (GNN) models, specifically GCNs and SGCs, leveraging the complete graph structure available during both training and inference. This setting allows us to assess our method's performance when the entire graph topology is known and utilized throughout the learning process.
\end{itemize}

\section{GCN Model Inductive Setting Results}
\label{app:gcn_results}
In addition to the SGC model results presented in the main text, we also conducted experiments using Graph Convolutional Network (GCN) models. Figure \ref{fig:gcn_results} shows the accuracy curves for node dropping experiments using GCN models across various datasets.
\begin{figure}[ht]
\centering
\includegraphics[width=\textwidth]{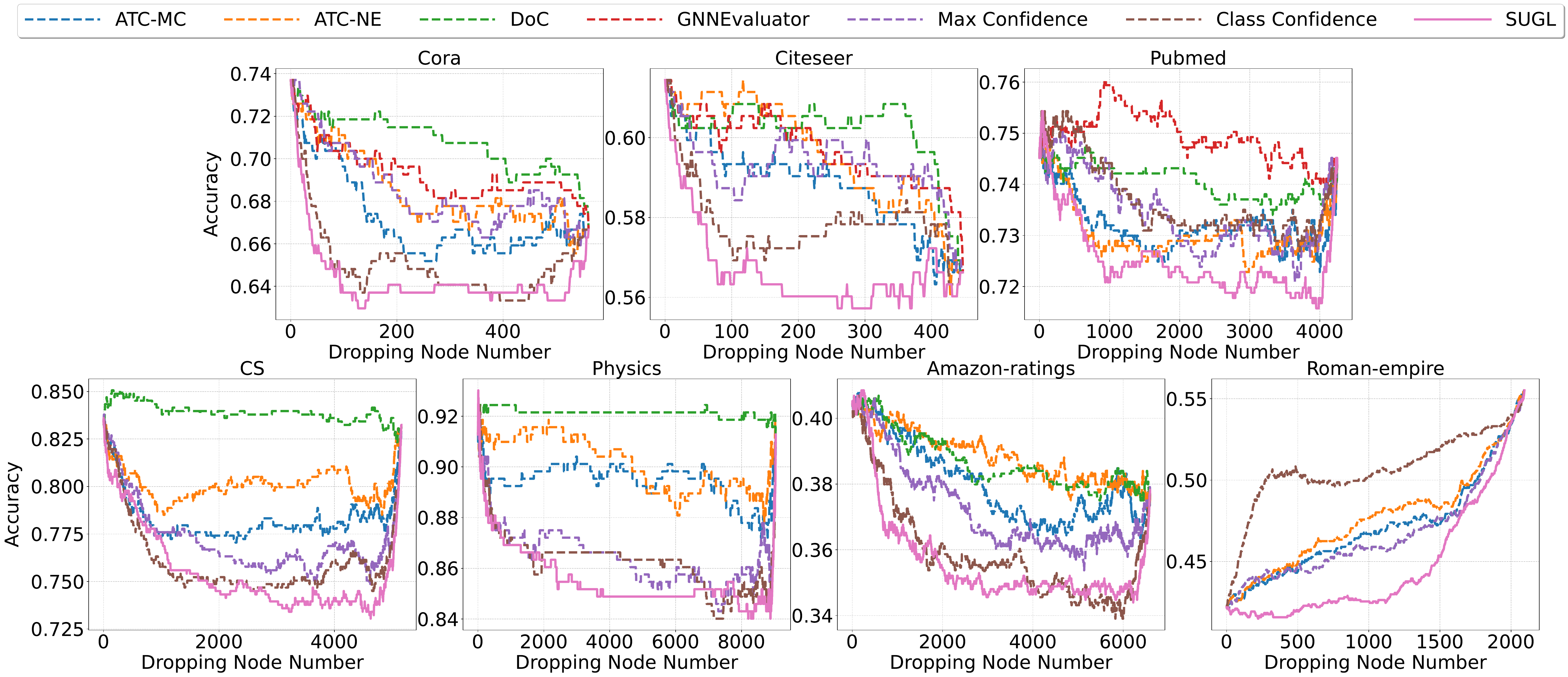}
\caption{Accuracy curves for node dropping experiments using the GCN model on various datasets in the inductive setting. Similar to the SGC results, our proposed \SVGL method demonstrates superior performance in maintaining higher accuracy as nodes are removed. Note that GNNEvaluator is not shown for the larger datasets due to Out of Memory (OOM) errors.}
\label{fig:gcn_results}
\end{figure}
The results for GCN models closely mirror those observed for SGC models. \SVGL consistently outperforms other methods across various datasets, demonstrating both a steeper initial accuracy drop and better long-term stability. This consistency across different GNN architectures further validates the robustness and wide applicability of our Shapley-Guided Utility Learning framework in graph inference data valuation tasks.

\section{Transductive Setting Result Analysis}
\label{app:transductive_analysis}
To provide a comprehensive evaluation of our \SVGL framework, we extended our experiments to the transductive setting using the Cora, Citeseer, and Pubmed datasets. In this setting, the entire graph structure is known during both training and inference, allowing us to assess our method's performance when the full graph topology is utilized throughout the learning process.
\subsection{Experimental Setup}
For the transductive setting, we used full Graph Neural Network (GNN) models, specifically Graph Convolutional Networks (GCNs) and Simplified Graph Convolutions (SGCs). These models leverage the complete graph structure available during both training and inference.
\subsection{Node Dropping Experiments}
We conducted node dropping experiments similar to those in the inductive setting. Figures \ref{fig:sgc_gcn_transductive} show the accuracy curves for node dropping experiments using SGC and GCN models respectively in the transductive setting.

\begin{figure}[tbp]
    \centering
    \begin{minipage}[b]{0.85\textwidth}
        \centering
        \includegraphics[width=\textwidth]{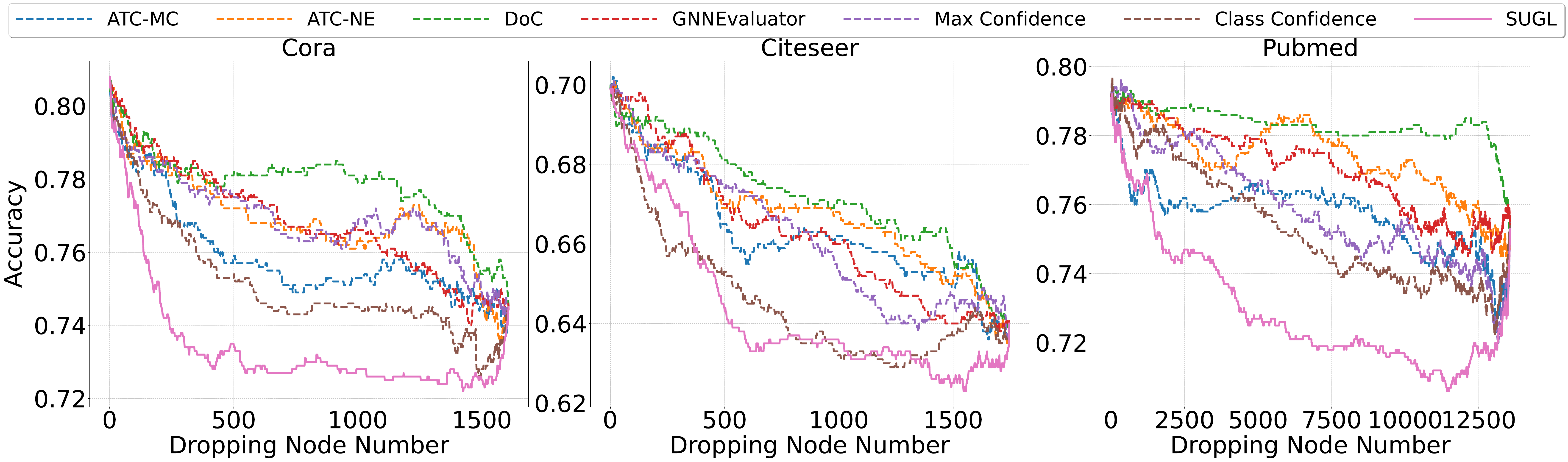}
    \end{minipage}
    \vfill
    \begin{minipage}[b]{0.85\textwidth}
        \centering
        \includegraphics[width=\textwidth]{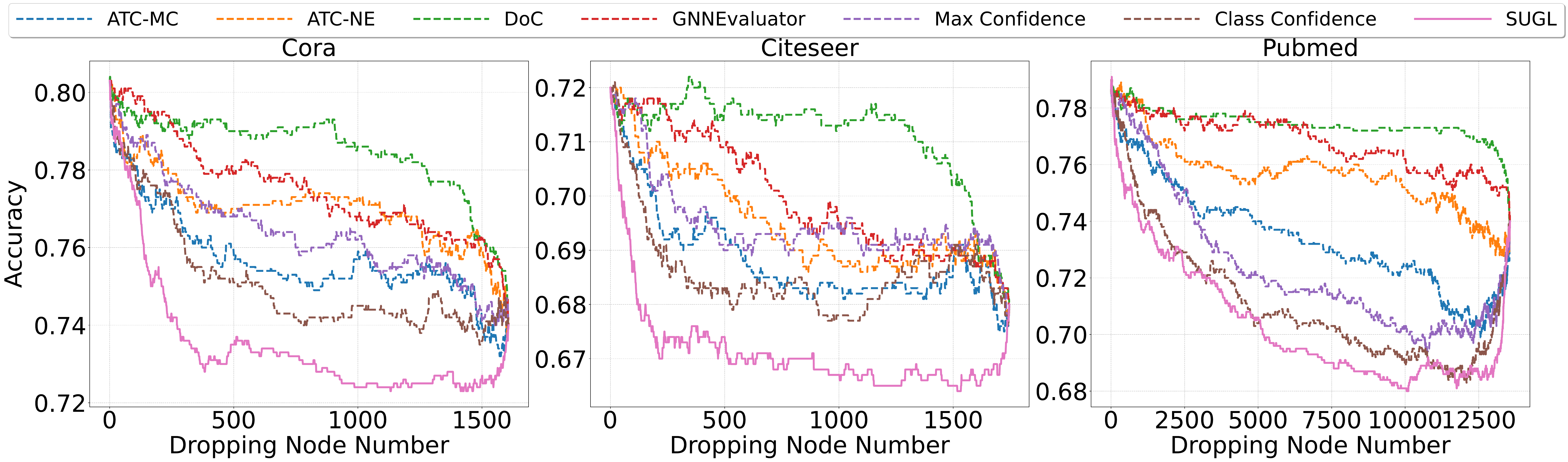}
    \end{minipage}
    \caption{Accuracy curves for node dropping experiments using the SGC (above) and GCN models (below) in the transductive setting.}
    \label{fig:sgc_gcn_transductive}
\end{figure}

According to these figures, our \SVGL method consistently maintains higher accuracy as nodes are removed across all datasets for both SGC and GCN models. This performance is particularly notable in the Pubmed dataset, where \SVGL shows a significant advantage over other methods throughout the node removal process.

For a more detailed quantitative analysis, including Area Under the Curve (AUC) results for both inductive and transductive settings, please refer to Appendix \ref{app:quantitative}.

\section{Detailed Quantitative Analysis} \label{app:quantitative}

To provide a more comprehensive evaluation of our \SVGL method, we present here a detailed quantitative analysis using the Area Under the Curve (AUC) metric for the node dropping process. A lower AUC score indicates superior performance, representing a more rapid decline in model accuracy when high-value nodes are removed.

\subsection{Inductive Setting Results}
Table \ref{tab:auc_results} presents the AUC results for node dropping experiments in the inductive setting, comparing our \SVGL method with several baselines across different datasets and models.

\begin{table}[ht]
\centering
\caption{AUC results for node dropping experiments in the inductive setting. Our proposed Shap Lasso method consistently delivers competitive performance across various datasets and models. OOM refers to Out of Memory error.}
\begin{adjustbox}{width=\columnwidth,center}
\tabcolsep=0.12cm
\renewcommand{\arraystretch}{1.2}
\begin{tabular}{lcccccccc}
\toprule
\textbf{Dataset} & \textbf{Model} & \textbf{ATC-MC} & \textbf{ATC-NE} & \textbf{DoC} & \textbf{GNNEvaluator} & \textbf{Max Confidence} & \textbf{Class Confidence} & \textbf{\SVGL} \\
\midrule
\multirow{2}{*}{\textbf{Cora}} & SGC & 382.87 & 399.23 & 393.73 & 393.83 & 384.35 & 374.22 & \textbf{368.46} \\
                               & GCN & 377.29 & 384.54 & 397.35 & 389.21 & 384.83 & 365.48 & \textbf{361.19} \\
\midrule
\multirow{2}{*}{\textbf{Citeseer}} & SGC & 269.84 & 270.44 & 273.14 & 271.21 & 269.17 & 262.43 & \textbf{257.55} \\
                                   & GCN & 262.94 & 266.65 & 269.02 & 267.00 & 265.05 & 258.69 & \textbf{253.29} \\
\midrule
\multirow{2}{*}{\textbf{Pubmed}} & SGC & 3098.02 & 3112.55 & 3136.04 & 3139.11 & 3118.71 & 3111.39 & \textbf{3068.07} \\
                                 & GCN & 3109.80 & 3104.37 & 3147.28 & 3184.00 & 3121.70 & 3131.80 & \textbf{3080.99} \\
\midrule
\multirow{2}{*}{\textbf{CS}} & SGC & 3879.30 & 3937.35 & 4023.95 & OOM & 3690.63 & 3694.78 & \textbf{3605.92} \\
                             & GCN & 4041.18 & 4134.89 & 4329.54 & OOM & 3984.78 & 3926.17 & \textbf{3896.88} \\
\midrule
\multirow{2}{*}{\textbf{Physics}} & SGC & 8007.06 & 8138.48 & 7802.27 & OOM & 7802.27 & 7817.30 & \textbf{7797.74} \\
                                  & GCN & 8085.59 & 8149.07 & 8328.46 & OOM & 7818.78 & 7800.91 & \textbf{7735.92} \\
\midrule
\multirow{2}{*}{\textbf{Amazon-ratings}} & SGC & 2513.36 & 2577.81 & 2557.59 & OOM & 2557.58 & 2466.70 & \textbf{2401.77} \\
                                         & GCN & 2508.83 & 2562.55 & 2551.39 & OOM & 2458.66 & 2368.02 & \textbf{2352.09} \\
\midrule
\multirow{2}{*}{\textbf{Roman-empire}} & SGC & 927.38 & 944.23 & 907.65 & OOM & 907.65 & 1023.01 & \textbf{851.98} \\
                                       & GCN & 985.70 & 997.63 & 978.04 & OOM & 978.04 & 1061.69 & \textbf{935.10} \\
\bottomrule
\end{tabular}
\end{adjustbox}
\label{tab:auc_results}
\end{table}

As evident from Table \ref{tab:auc_results}, our proposed \SVGL method consistently achieves competitive performance across different datasets and models. In many cases, it outperforms the baseline methods, particularly for datasets with heterophily such as Amazon-ratings and Roman-empire. This quantitative comparison aligns with our observations from the accuracy curves, further demonstrating the effectiveness of our Shapley-Guided Utility Learning approach in capturing the importance of nodes in graph structures, even in challenging scenarios where connected nodes may have different characteristics.
The superior performance of \SVGL, as reflected in both the accuracy curves and AUC scores, can be attributed to its ability to capture the complex interactions and dependencies among nodes in the graph, leveraging the Shapley value concept to assign importance scores. By incorporating both graph structure and node features, \SVGL provides a more comprehensive and accurate assessment of node importance compared to the baselines, which rely on simpler metrics such as confidence scores or average differences.

\subsection{Transductive Setting Results}
Table \ref{tab:transductive_results} presents the AUC results for node dropping experiments in the transductive setting.

\begin{table}[ht]
\centering
\caption{AUC results for node dropping experiments in the transductive setting}
\begin{adjustbox}{width=\columnwidth,center}
\tabcolsep=0.08cm
\renewcommand{\arraystretch}{1.2}
\begin{tabular}{lcccccccc}
\toprule
\textbf{Dataset} & \textbf{Model} & \textbf{ATC-MC} & \textbf{ATC-NE} & \textbf{DoC} & \textbf{GNNEvaluator} & \textbf{Max Confidence} & \textbf{Class Confidence} & \textbf{\SVGL} \\
\midrule
\multirow{2}{*}{\textbf{Cora}} & SGC & 1219.90 & 1237.21 & 1251.55 & 1235.32 & 1237.00 & 1207.45 & \textbf{1180.11} \\
                               & GCN & 1216.45 & 1238.17 & 1261.80 & 1246.09 & 1228.15 & 1207.03 & \textbf{1180.95} \\
\midrule
\multirow{2}{*}{\textbf{Citeseer}} & SGC & 1157.87 & 1165.03 & 1172.98 & 1157.19 & 1155.54 & 1127.22 & \textbf{1123.12} \\
                                   & GCN & 1201.37 & 1213.33 & 1241.40 & 1221.27 & 1213.03 & 1197.15 & \textbf{1171.49} \\
\midrule
\multirow{2}{*}{\textbf{Pubmed}} & SGC & 10271.50 & 10495.00 & 10622.11 & 10450.98 & 10302.39 & 10217.91 & \textbf{9894.75} \\
                                 & GCN & 9959.40 & 10262.38 & 10500.21 & 10417.29 & 9814.49 & 9648.27 & \textbf{9547.61} \\
\bottomrule
\end{tabular}
\end{adjustbox}
\label{tab:transductive_results}
\end{table}

In the transductive setting, \SVGL consistently outperforms all baseline methods across all datasets and models. The performance gap is even more pronounced compared to the inductive setting, particularly for larger datasets like Pubmed. These results further corroborate the effectiveness of our approach across different experimental settings.

\section{In-Sample Error Comparison of Shapley and Accuracy Optimization}
\label{app:in_sample_error_comparison}

To empirically validate our theoretical arguments from Section \ref{sec:method}, we compared the performance of Shapley-guided optimization (\SVGLShapley) against accuracy-based optimization (\SVGLAccuracy). We used identical validation permutation data for both methods, as described in Section \ref{subsec:graph_training_valuation}. To isolate the effect of the optimization objective, we employed an Ordinary Least Squares (OLS) setting, excluding regularization terms and cross-validation. All other aspects, including the feature set, remained constant between approaches.

We fitted both models using 10 permutations as an observation for in-sample Mean Squared Error (MSE) comparison. A Wilcoxon signed-rank test was conducted to assess the statistical significance of the difference in MSE for Shapley value prediction between \SVGLShapley and \SVGLAccuracy.

Table \ref{tab:shapley_vs_accuracy} presents the mean in-sample MSE on the validation set for both approaches across various datasets in the inductive setting, using the SGC model.

\begin{table}[ht]
\centering
\caption{In-sample validation MSE comparison between \SVGLShapley and \SVGLAccuracy across datasets using the SGC model in the inductive setting.}
\small
\begin{tabular}{lccc}
\toprule
\textbf{Dataset} & \textbf{\SVGLShapley MSE} & \textbf{\SVGLAccuracy MSE} & \textbf{p-value} \\
\midrule
Cora           & $\mathbf{1.35 \times 10^{-6}}$ & $1.40 \times 10^{-6}$ & $1.00 \times 10^{-12}$ \\
Citeseer       & $\mathbf{6.08 \times 10^{-7}}$ & $6.47 \times 10^{-7}$ & $1.00 \times 10^{-12}$ \\
Pubmed         & $\mathbf{2.70 \times 10^{-8}}$ & $2.73 \times 10^{-8}$ & $9.31 \times 10^{-7}$  \\
CS             & $\mathbf{5.48 \times 10^{-8}}$ & $5.67 \times 10^{-8}$ & $9.77 \times 10^{-4}$  \\
Physics        & $\mathbf{1.95 \times 10^{-8}}$ & $2.02 \times 10^{-8}$ & $3.13 \times 10^{-2}$  \\
Amazon-ratings & $\mathbf{3.22 \times 10^{-8}}$ & $3.26 \times 10^{-8}$ & $9.31 \times 10^{-7}$  \\
Roman-empire   & $\mathbf{7.59 \times 10^{-8}}$ & $7.66 \times 10^{-8}$ & $9.31 \times 10^{-7}$  \\
\bottomrule
\end{tabular}
\label{tab:shapley_vs_accuracy}
\end{table}

The results in Table \ref{tab:shapley_vs_accuracy} show that \SVGLShapley consistently achieves lower MSE across all datasets, with statistically significant differences (p < 0.05) as determined by the Wilcoxon signed-rank test. The performance gap is particularly notable for larger datasets like CS and Physics, demonstrating \SVGLShapley's scalability and effectiveness across various graph structures.

These findings provide strong empirical support for our theoretical analysis in Section \ref{sec:method}. They demonstrate that directly optimizing for Shapley values leads to more accurate estimation of node importance compared to the accuracy-based approach.

\section{Dropping Node Comparison of Optimization Objectives}
\label{app:dropping_node_comparison}
To further evaluate the effectiveness of our proposed Shapley-guided optimization approach, we conducted an ablation study comparing two different objective functions: Shapley value optimization (\SVGLShapley) and accuracy-based optimization (\SVGLAccuracy). This experiment replicates our main experimental setting and goals, as described in Section \ref{sec:method}, but with a key difference in the optimization target for \SVGLAccuracy.
For \SVGLAccuracy, we used the same form as our proposed method but fitted it on accuracy-level data from all validation permutations. We then used this model to perform node dropping experiments, following the same procedure as in our main experiments. It's worth noting that this comparison may be influenced by factors such as cross-validation and the specific characteristics of accuracy-level data, which could introduce some variability in the results.
Table \ref{tab:shap_vs_accu} presents the AUC results for node dropping experiments using both methods across various datasets and models in the inductive setting.

\begin{table}[ht]
\centering
\caption{Comparison of AUC results for \SVGLShapley and \SVGLAccuracy in the inductive setting. Lower AUC indicates better performance, and our proposed Shapley-guided optimization (\SVGLShapley) generally outperforms the accuracy-based optimization (\SVGLAccuracy) in terms of AUC results across most datasets.}
\begin{adjustbox}{width=\textwidth,center}
\tabcolsep=0.08cm
\renewcommand{\arraystretch}{1.2}
\begin{tabular}{lcccccccccccccc}
\toprule
& \multicolumn{2}{c}{\textbf{Cora}} & \multicolumn{2}{c}{\textbf{Citeseer}} & \multicolumn{2}{c}{\textbf{Pubmed}} & \multicolumn{2}{c}{\textbf{CS}} & \multicolumn{2}{c}{\textbf{Physics}} & \multicolumn{2}{c}{\textbf{Amazon-ratings}} & \multicolumn{2}{c}{\textbf{Roman-empire}} \\
\cmidrule(lr){2-3} \cmidrule(lr){4-5} \cmidrule(lr){6-7} \cmidrule(lr){8-9} \cmidrule(lr){10-11} \cmidrule(lr){12-13} \cmidrule(lr){14-15}
& SGC & GCN & SGC & GCN & SGC & GCN & SGC & GCN & SGC & GCN & SGC & GCN & SGC & GCN \\
\midrule
\SVGLShapley & \textbf{368.46} & \textbf{361.19} & \textbf{257.55} & 253.29 & \textbf{3068.07} & \textbf{3080.99} & \textbf{3605.92} & \textbf{3896.88} & \textbf{7797.74} & 7735.92 & \textbf{2401.77} & \textbf{2352.09} & \textbf{851.98} & 935.10 \\
\SVGLAccuracy & 369.74 & 362.66 & 257.66 & \textbf{252.87} & 3078.77 & 3090.29 & 3651.12 & 4140.17 & 7980.06 & \textbf{7707.45} & 2405.30 & 2463.65 & 852.29 & \textbf{924.64} \\
\bottomrule
\end{tabular}
\end{adjustbox}
\label{tab:shap_vs_accu}
\end{table}

The results show that \SVGLShapley generally outperforms \SVGLAccuracy across most datasets and models. Specifically, \SVGLShapley achieves better results in 10 out of 14 dataset-model combinations. The improvement is particularly noticeable for larger and more complex datasets such as CS, Physics, and Amazon-ratings. This suggests that the Shapley-guided approach is more effective at identifying critical nodes in the graph structure, especially for datasets with more intricate relationships between nodes.
These findings align with our theoretical expectations and earlier in-sample error comparisons. They demonstrate that directly optimizing for Shapley values leads to more accurate and robust inference data valuation.

\begin{table}[!h]
    \centering
    \caption{Feature importance coefficients for data-specific features across datasets and models}
    \label{tab:data_specific_features}
    \begin{tabular}{l|l|c|c}
    \toprule
        \textbf{Feature Name} & \textbf{Dataset} & \textbf{GCN} & \textbf{SGC} \\ 
        \midrule
        \multirow{7}{*}{Edge Cosine Similarity} & Cora & 0 & 0 \\
                                                & Citeseer & 0.007 & 0 \\ 
                                                & Pubmed & 0 & 0.002 \\
                                                & CS & 0.031 & 0.061 \\
                                                & Physics & 0.003 & 0.068 \\
                                                & Amazon-ratings & 0.025 & 0 \\
                                                & Roman-empire & 0 & 0 \\
        \midrule
        \multirow{7}{*}{Representation Distance} & Cora & 0.174 & 0.316 \\
                                                 & Citeseer & 0 & 0 \\ 
                                                 & Pubmed & 0 & 0.040 \\
                                                 & CS & 0 & 0 \\
                                                 & Physics & 0 & 0 \\
                                                 & Amazon-ratings & 0.285 & 0.032 \\
                                                 & Roman-empire & 0 & 0 \\
        \midrule
        \multirow{7}{*}{Classwise Rep. Distance} & Cora & 0 & 0 \\
                                                  & Citeseer & 0 & 0.128 \\ 
                                                  & Pubmed & 0.548 & 0.383 \\
                                                  & CS & 0.195 & 0 \\
                                                  & Physics & 0 & 0 \\
                                                  & Amazon-ratings & 0 & 0 \\
                                                  & Roman-empire & 0 & 0 \\
        \bottomrule
    \end{tabular} 
\end{table}

\begin{table}[!h]
    \centering
    \caption{Feature importance coefficients for model-specific features across datasets and models}
    \label{tab:model_specific_features}
    \begin{tabular}{l|l|c|c}
    \toprule
        \textbf{Feature Name} & \textbf{Dataset} & \textbf{GCN} & \textbf{SGC} \\ 
        \midrule
        \multirow{7}{*}{Maximum Predicted Confidence} & Cora & 0 & 0 \\
                                                       & Citeseer & 0.452 & 0.291 \\ 
                                                       & Pubmed & 0 & 0 \\
                                                       & CS & 0.027 & 0.465 \\
                                                       & Physics & 0 & 0 \\
                                                       & Amazon-ratings & 0 & 0 \\
                                                       & Roman-empire & 0.221 & 0.273 \\
        \midrule
        \multirow{7}{*}{Target Class Confidence} & Cora & 0.464 & 0.159 \\
                                                  & Citeseer & 0.193 & 0.230 \\ 
                                                  & Pubmed & 0.179 & 0.227 \\
                                                  & CS & 0.236 & 0.033 \\
                                                  & Physics & 0.738 & 0.626 \\
                                                  & Amazon-ratings & 0.298 & 0.606 \\
                                                  & Roman-empire & 0 & 0 \\
        \midrule
        \multirow{7}{*}{Negative Entropy} & Cora & 0.194 & 0.114 \\
                                           & Citeseer & 0.343 & 0.271 \\ 
                                           & Pubmed & 0.147 & 0.176 \\
                                           & CS & 0.147 & 0.183 \\
                                           & Physics & 0.252 & 0.222 \\
                                           & Amazon-ratings & 0.166 & 0.361 \\
                                           & Roman-empire & 0.227 & 0.191 \\
        \midrule
        \multirow{7}{*}{Propagated Maximum Confidence} & Cora & 0.168 & 0 \\
                                                        & Citeseer & 0.005 & 0.079 \\ 
                                                        & Pubmed & 0.110 & 0.173 \\
                                                        & CS & 0.216 & 0.258 \\
                                                        & Physics & 0 & 0.083 \\
                                                        & Amazon-ratings & 0.133 & 0 \\
                                                        & Roman-empire & 0.553 & 0.477 \\
        \midrule
        \multirow{7}{*}{Confidence Gap} & Cora & 0 & 0.057 \\
                                         & Citeseer & 0 & 0 \\ 
                                         & Pubmed & 0 & 0 \\
                                         & CS & 0.148 & 0 \\
                                         & Physics & 0.007 & 0 \\
                                         & Amazon-ratings & 0.092 & 0 \\
                                         & Roman-empire & 0 & 0.059 \\
        \bottomrule
    \end{tabular} 
\end{table}

\section{Ablation Study on Feature Contributions}
To investigate the separate contributions of data-specific and model-specific features, we performed a comprehensive analysis of feature importance across different datasets and model architectures (GCN and SGC on inductive setting). Our feature coefficients were obtained through L1-regularized optimization, where each coefficient represents the feature's contribution to the utility function. To ensure fair comparison, we normalized these coefficients within each dataset-model combination so they sum to 1, allowing us to compare relative importance across different settings.

The detailed results for data-specific features are presented in Table \ref{tab:data_specific_features} and the detailed results for model-specific features are shown in Table \ref{tab:model_specific_features}.

To quantify the overall feature importance, we further examine the feature selection frequency. For each feature, we count its appearance (non-zero coefficient) across datasets and normalize by the total number of datasets, providing insight into how consistently each feature is selected by our L1-regularized optimization. The summary of feature selection frequencies is presented in Table \ref{tab:feature_selection_frequency}.

\begin{table}[!h]
    \centering
    \caption{Feature selection frequency across datasets for GCN and SGC models}
    \label{tab:feature_selection_frequency}
    \begin{tabular}{l|l|c|c}
    \toprule
        \textbf{Feature Type} & \textbf{Feature Name} & \textbf{GCN} & \textbf{SGC} \\ 
        \midrule
        \multirow{3}{*}{Data-specific} & Edge Cosine Similarity & 0.429 & 0.429 \\
                                       & Representation Distance & 0.286 & 0.429 \\
                                       & Classwise Rep. Distance & 0.286 & 0.286 \\
        \midrule
        \multirow{5}{*}{Model-specific} & Maximum Predicted Confidence & 0.429 & 0.429 \\
                                        & Target Class Confidence & 0.857 & 0.857 \\ 
                                        & Negative Entropy & 1.000 & 1.000 \\
                                        & Propagated Maximum Confidence & 0.714 & 0.714 \\
                                        & Confidence Gap & 0.429 & 0.286 \\
        \bottomrule
    \end{tabular}
\end{table}

The analysis reveals several key patterns in feature importance. Model-specific features exhibit higher and more consistent selection rates across datasets, with Negative Entropy being selected in all datasets and Target Class Confidence appearing in 85.7\% of datasets for both architectures. This suggests that model-specific features capture fundamental aspects of model behavior independent of dataset characteristics.

On the other hand, data-specific features show more selective usage, with frequencies ranging from 0.286 to 0.429, indicating that they may be more dataset-dependent. The varying selection patterns suggest that data-specific features capture dataset-specific characteristics that complement the more universal model-specific features.

Interestingly, the selection patterns are remarkably consistent between GCN and SGC architectures, with only minor differences in selection frequencies. This consistency across architectures suggests that our feature design successfully captures fundamental aspects of graph inference quality rather than architecture-specific characteristics.

These findings support our feature design choices and demonstrate the complementary roles of data-specific and model-specific features in utility estimation. While model-specific features provide a universal basis for assessing model performance, data-specific features allow the utility learning model to adapt to the unique characteristics of each dataset. Notably, even for heterophilous graphs where GNNs typically perform worse, data-specific features like edge cosine similarity can still be selected, as shown in the ablation study tables for the Amazon-ratings. This highlights the ability of our utility learning framework to capture the nuanced relationship between graph homophily and test accuracy.

\section{Large-scale Evaluation on OGB-Arxiv}
\label{app:ogb_arxiv}

To demonstrate the scalability and effectiveness of our \SVGL framework on large-scale graph datasets, we conducted additional experiments on the ogbn-arxiv dataset \citep{hu2020open}. This dataset represents a citation network consisting of 169,343 nodes and 1,166,243 edges, where each node represents an arXiv paper and each directed edge indicates a citation.

\subsection{Experimental Setup}
For this experiment, we randomly sampled 10\% nodes from each original train/val/test split and used 50 permutations for utility learning and 5 permutations for testing valuation. This resulted in a substantial evaluation set of over 27,000 testing neighbor nodes - to the best of our knowledge, this represents the first attempt at graph data valuation of this magnitude.

\subsection{Node Dropping Results}

Following the same evaluation protocol described in Section \ref{sec:experiments}, we conducted node dropping experiments on the ogbn-arxiv dataset. The results are shown in Figure \ref{fig:ogb_arxiv} and Table \ref{tab:ogb_arxiv_auc}.

\begin{figure}[ht]
\centering
\includegraphics[width=0.80\textwidth]{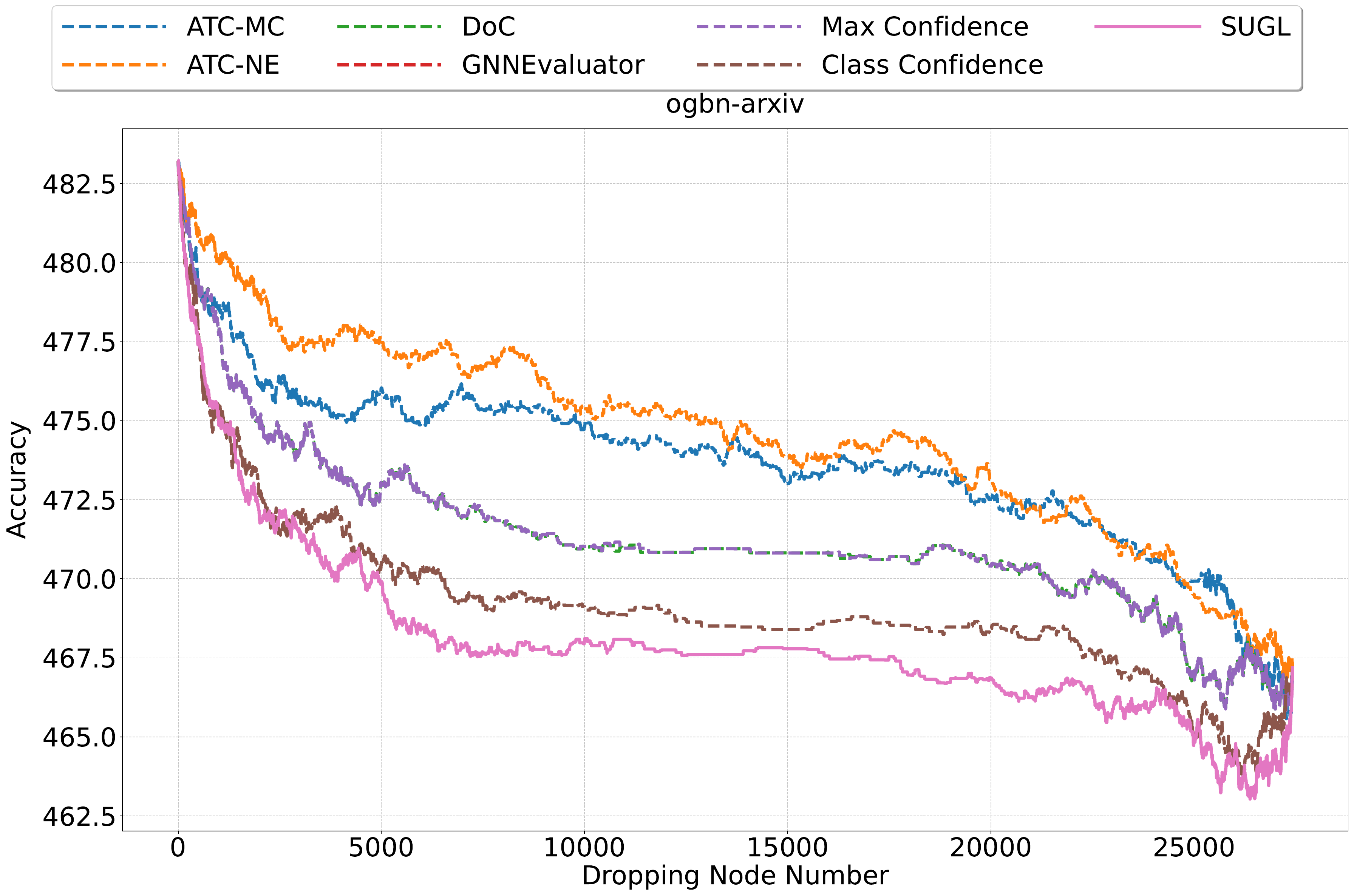}
\caption{Accuracy curves for node dropping experiments on the ogbn-arxiv dataset using the SGC model in the inductive setting. Our proposed \SVGL method demonstrates superior performance, achieving a steeper decline in accuracy as high-value nodes are removed.}
\label{fig:ogb_arxiv}
\end{figure}

Table \ref{tab:ogb_arxiv_auc} presents the performance at key points during the node dropping process:

\begin{table}[ht]
\centering
\caption{Performance across the node dropping process on ogbn-arxiv dataset}
\begin{adjustbox}{width=\columnwidth,center}
\tabcolsep=0.12cm
\renewcommand{\arraystretch}{1.2}
\begin{tabular}{lccccccc}
\toprule
\textbf{Method} & \textbf{Start (idx 0)} & \textbf{5K nodes} & \textbf{10K nodes} & \textbf{15K nodes} & \textbf{20K nodes} & \textbf{End (idx 27421)} & \textbf{AUC} \\
\midrule
ATC-MC & 0.4832 & 0.4760 & 0.4748 & 0.4730 & 0.4726 & 0.4672 & 12989.56 \\
ATC-NE & 0.4832 & 0.4776 & 0.4755 & 0.4738 & 0.4731 & 0.4672 & 13013.93 \\
DoC & 0.4832 & 0.4730 & 0.4710 & 0.4708 & 0.4704 & 0.4672 & 12923.52 \\
Max Confidence & 0.4832 & 0.4730 & 0.4710 & 0.4708 & 0.4704 & 0.4672 & 12923.55 \\
Class Confidence & 0.4832 & 0.4705 & 0.4690 & 0.4684 & 0.4684 & 0.4672 & 12864.68 \\
\SVGL & 0.4832 & \textbf{0.4698} & \textbf{0.4680} & \textbf{0.4678} & \textbf{0.4668} & 0.4672 & \textbf{12834.35} \\
\bottomrule
\end{tabular}
\end{adjustbox}
\label{tab:ogb_arxiv_auc}
\end{table}

The results demonstrate \SVGL's superior performance on large-scale graphs. It achieves the lowest AUC score (12834.35), significantly outperforming traditional approaches like ATC-MC (12989.56) and ATC-NE (13013.93). Note that GNNEvaluator encounters Out-of-Memory (OOM) errors on this large dataset, further highlighting the efficiency advantage of our approach.

\section{Detailed Algorithm Description and Evaluation Protocol}  \label{sec:alg}
To provide a comprehensive understanding of our framework's implementation and evaluation, we present a detailed description through three key components: (1) Shapley-Guided Utility Learning (\SVGL) for learning utility functions from validation data, (2) Test-time Structure Value Estimation for inference without ground truth labels, and (3) Node Dropping Evaluation Protocol for empirical validation. These components collectively implement and evaluate the methodology introduced in Section 4 of our paper.

\begin{algorithm}[!h]
\caption{Shapley-Guided Utility Learning (\SVGL)}
\label{alg:SGUL}
\textbf{Input}: Validation graph $G_{Val} = (V_{Val}, E_{Val}, X_{Val})$, Training graph $G_{Tr}$, Fixed trained GNN model $f(\cdot)$, Number of permutations $M$, Regularization parameter $\lambda$ \\
\textbf{Output}: Optimal parameter vector $\mathbf{w}^*$
\begin{algorithmic}[1]
\State \textbf{Initialize:}
\State $\Psi \leftarrow \emptyset$ \Comment{Feature Shapley matrix}
\State $\Phi \leftarrow \emptyset$ \Comment{True Shapley values}
\For{each node $i \in N(V_{Val})$}
    \State Generate $M$ valid permutations $\{\pi_m\}_{m=1}^M \in \Omega(N(V_{Val}))$
    \For{each permutation $\pi_m$}
        \State Construct subgraph sequence $\{G_{sub}(\pi_m, t)\}_{t=1}^T$
        \State Extract features $\mathbf{x}(S)$ for each subgraph
        \State Compute utility values $U(S)$ using validation accuracy
    \EndFor
    \State Compute feature Shapley vector $\psi_i$:
    \For{each feature $k$}
        \State $\phi_i(U_k) \leftarrow \frac{1}{M} \sum_{m=1}^M \big[U_k(N^{\pi_m}_i \cup \{i\}) - U_k(N^{\pi_m}_i)\big]$
    \EndFor
    \State $\psi_i \leftarrow [\phi_i(U_1), \phi_i(U_2), \dots, \phi_i(U_d)]^\top$
    \State Compute true Shapley value $\phi_i(U)$
    \State $\Psi \leftarrow \Psi \cup \{\psi_i\}$
    \State $\Phi \leftarrow \Phi \cup \{\phi_i(U)\}$
\EndFor
\State \textbf{Optimize parameter vector:}
\State $\mathbf{w}^* \leftarrow \arg\min_{\mathbf{w}} \sum_{i \in N(V_{Val})} (\phi_i(U) - \mathbf{w}^\top\psi_i)^2 + \lambda \|\mathbf{w}\|_1$
\State \textbf{Return} $\mathbf{w}^*$
\end{algorithmic}
\end{algorithm}

\subsection{Training Phase: Shapley-Guided Utility Learning} \label{sec:train_alg}
Algorithm~\ref{alg:SGUL} implements our end-to-end optimization framework for graph inference data valuation. The algorithm takes as input a validation graph $G_{Val}$, a training graph $G_{Tr}$, a fixed trained GNN model $f(\cdot)$, the number of permutations $M$, and a regularization parameter $\lambda$. As introduced in Section 4.1, the framework begins by initializing data structures $\Psi$ and $\Phi$ to store Feature Shapley vectors and true Shapley values respectively, enabling systematic accumulation of value estimates across validation nodes.

he core computation occurs in Step 2, implementing the feature extraction and value computation process detailed in Section 4.2.1. For each node $i$ in the validation graph neighborhood $N(V_{Val})$, the algorithm generates $M$ valid permutations through Algorithm \ref{alg:sampling} that respect graph connectivity constraints. Each permutation $\pi_m$ produces a sequence of subgraphs $\{G_{sub}(\pi_m,t)\}_{t=1}^T$ through incremental node addition. For each subgraph, we extract comprehensive features $\mathbf{x}(S)$ spanning both data-specific measures (edge cosine similarity, representation distance) and model-specific measures (confidence scores, entropy values), with validation accuracy establishing ground truth utility values $U(S)$.

Following Section 4.2.2, the algorithm constructs Feature Shapley vectors $\psi_i$ by computing individual Shapley values $\phi_i(U_k)$ for each feature type $k$. This process captures each node's contribution across multiple utility metrics, creating a rich representation incorporating both graph topology and model behavior. The framework concludes with the optimization step described in Section 4.2.3, where parameter vector $\mathbf{w}$ is optimized through our objective function to directly minimize Shapley prediction error while promoting sparsity through L1 regularization.

\subsection{Inference Phase: Test-time Structure Value Estimation} \label{sec:train_alg}
Algorithm~\ref{alg:test_time_estimation} demonstrates our test-time structure valuation process. Given a test graph $G_{Te}$, target nodes $V_t$, and the learned parameter vector $\mathbf{w}^*$ from Algorithm 1, we estimate Structure-aware Shapley values for test neighbor nodes without requiring ground truth labels.

The test-time algorithm follows a similar permutation sampling and feature extraction pipeline as training, but crucially operates without access to ground truth labels. For each neighbor node $i \in N(V_t)$, we generate valid permutations using Algorithm \ref{alg:sampling} respecting graph connectivity and construct corresponding subgraph sequences.  We extract the same transferable features established in the training phase, computing predicted utility $\hat{U}(S)$ through the learned linear combination $\mathbf{w}^{*\top}\mathbf{x}(S)$. These predicted utilities enable Structure-aware Shapley value estimation through permutation sampling, effectively quantifying each neighbor's contribution during inference.

Together, these algorithms form a complete framework for graph inference data valuation, enabling both efficient training of the utility function and rapid value estimation during test time. The framework's key innovation lies in its ability to learn utility functions without test labels while maintaining computational efficiency through structured feature extraction and direct Shapley value optimization.

\begin{algorithm}[!h]
\caption{Test-time Structure Value Estimation}
\label{alg:test_time_estimation}
\textbf{Input}: Test graph $G_{Te} = (V_{Te}, E_{Te}, X_{Te})$, Target nodes $V_t \subset V_{Te}$, Learned parameter vector $\mathbf{w}^*$, Number of permutations $M$, Fixed trained GNN model $f(\cdot)$ \\
\textbf{Output}: Estimated Structure-Aware Shapley values $\{\hat{\phi}_i\}_{i \in N(V_t)}$ for test neighbor nodes
\begin{algorithmic}[1]
\State \textbf{Initialize:}
\State $\hat{\Phi} \leftarrow \emptyset$ \Comment{Estimated Structure-Aware Shapley values set}
\For{each node $i \in N(V_t)$}
    \State Generate $M$ valid permutations $\{\pi_m\}_{m=1}^M \in \Omega(N(V_t))$
    \For{each permutation $\pi_m$}
        \State Construct subgraph sequence $\{G_{sub}(\pi_m, t)\}_{t=1}^T$
        \State Extract transferable features $\mathbf{x}(S)$
        \State Compute predicted accuracy $\hat{U}(S) = \mathbf{w}^\top \mathbf{x}(S)$
    \EndFor
    \State Estimate Structure-Aware Shapley value:
    \State $\hat{\phi}_i = \frac{1}{M} \sum_{m=1}^M \big[\hat{U}(N^{\pi_m}_i \cup \{i\}) - \hat{U}(N^{\pi_m}_i)\big]$
    \State $\hat{\Phi} \leftarrow \hat{\Phi} \cup \{\hat{\phi}_i\}$
\EndFor
\State \textbf{Return} $\hat{\Phi}$
\end{algorithmic}
\end{algorithm}

\subsection{Node Dropping Evaluation Protocol} \label{sec:drop_alg}
To empirically validate the effectiveness of our structure-aware Shapley values, we conduct comprehensive node dropping experiments. This evaluation protocol measures how model performance degrades as we sequentially remove nodes ranked by their estimated importance, providing a direct assessment of our valuation framework's ability to identify critical graph structures.

Algorithm~\ref{alg:node_dropping} outlines the node dropping evaluation process. Given a test graph $G_{Te}$, target nodes $V_t$, and the Structure-aware Shapley values $\{\hat{\phi}_i\}_{i \in N(V_t)}$ computed using Algorithm 2, we rank the nodes in descending order of their estimated importance. We then iteratively remove nodes following this ranking and measure the model's prediction accuracy after each removal. The accuracy curve and Area Under the Curve (AUC) metric are used to quantify the effectiveness of our valuation method.

A lower AUC score indicates superior performance, as it represents a more rapid and sustained decline in model accuracy when high-value nodes are removed. This aligns with our goal of accurately identifying influential graph structures - removing truly important nodes should significantly impact model performance. As demonstrated in Section 6.4, \SVGL consistently achieves lower AUC scores compared to baseline methods, validating its effectiveness in identifying critical graph structures.

Furthermore, the shape of the accuracy curve provides additional insights into our method's behavior. An ideal curve should show a sharp initial drop, indicating the removal of highly influential nodes, followed by a consistent downward trend, suggesting stable and reliable importance rankings. Our experimental results in Section 6.4 demonstrate that \SVGL 's node dropping curves exhibit these desirable characteristics across various datasets and model architectures.

\begin{algorithm}[!h]
\caption{Node Dropping Evaluation}
\label{alg:node_dropping}
\textbf{Input}: Test graph $G_{Te} = (V_{Te}, E_{Te}, X_{Te})$, Target nodes $V_t \subset V_{Te}$, Structure-aware Shapley values $\{\hat{\phi}_i\}_{i \in N(V_t)}$, Fixed trained GNN model $f(\cdot)$ \\
\textbf{Output}: Accuracy curve $\{Acc_k\}_{k=1}^K$, Area Under the Curve (AUC)
\begin{algorithmic}[1]
\State Rank nodes in $N(V_t)$ by $\{\hat{\phi}_i\}_{i \in N(V_t)}$ in descending order
\State $K \leftarrow |N(V_t)|$
\For{$k = 1$ to $K$}
    \State Remove top-$k$ ranked nodes from $G_{Te}$ to obtain $G_k$
    \State Compute accuracy: $Acc_k = \frac{1}{|V_t|}\sum_{v \in V_t} \mathbb{I}(f(G_k)(v) = y_v)$
\EndFor
\State Compute AUC: $AUC = \frac{1}{K} \sum_{k=1}^K Acc_k$
\State \textbf{Return} $\{Acc_k\}_{k=1}^K$, $AUC$
\end{algorithmic}
\end{algorithm}

\subsection{Sampling Permissive Permutations} \label{sec:sample_alg}
The structure-aware Shapley value in our framework is formally defined as:

\begin{equation*}
\phi_i(N(\mathcal{V}_t), U) = \frac{1}{|\Omega(N(\mathcal{V}_t))|} \sum_{\pi \in \Omega(N(\mathcal{V}_t))}[U(N_i^\pi(\mathcal{V}_t) \cup \{i\}) - U(N_i^\pi(\mathcal{V}_t))]
\end{equation*}

This can be viewed as an expectation that we approximate through Monte Carlo sampling under precedence constraints. Specifically, we approximate the Shapley value using $M$ random permutations:

\begin{equation*}
\hat{\phi}_i(N(\mathcal{V}_t), U) = \frac{1}{M} \sum_{m=1}^M[U(N_i^{\pi_m}(\mathcal{V}_t) \cup \{i\}) - U(N_i^{\pi_m}(\mathcal{V}_t))]
\end{equation*}

To implement this approximation while respecting graph structure constraints, we employ Algorithm \ref{alg:sampling}.
\begin{algorithm}[!h]
\caption{Precedence-Constrained Permutation Sampling}
\label{alg:sampling}
\textbf{Input}: Test graph $\mathcal{G} = (\mathcal{V}, \mathcal{E}, \mathbf{X})$, Target nodes $\mathcal{V}_t \subset \mathcal{V}$, Number of samples $M$, Number of hops $k$ \\
\textbf{Output}: Set of valid permutations $\{\pi_m\}_{m=1}^M$
\begin{algorithmic}[1]
\For{$m = 1$ to $M$}
    \State \textbf{Initialize:} $\mathcal{V}_{\text{visited}} \leftarrow \mathcal{V}_t$
    \State $\mathcal{V}_{\text{active}} \leftarrow \{v \in \mathcal{N}_1(\mathcal{V}_t) \mid v \notin \mathcal{V}_{\text{visited}}\}$
    \While{$|\mathcal{V}_{\text{active}}| > 0$ and $|\mathcal{V}_{\text{visited}}| < k$}
        \State Sample $v$ from $\mathcal{V}_{\text{active}}$
        \State Update $\mathcal{V}_{\text{visited}} \leftarrow \mathcal{V}_{\text{visited}} \cup \{v\}$
        \State $\mathcal{V}_{\text{new}} \leftarrow \{u \in \mathcal{N}_1(v) \mid u \notin \mathcal{V}_{\text{visited}}\}$
        \State $\mathcal{V}_{\text{active}} \leftarrow \mathcal{V}_{\text{active}} \cup \mathcal{V}_{\text{new}} \setminus \{v\}$
    \EndWhile
\EndFor
\State \textbf{Return} $\{\pi_m\}_{m=1}^M$
\end{algorithmic}
\end{algorithm}

This algorithm ensures each sampled permutation $\pi$ satisfies the precedence constraint by maintaining connectivity. Starting from target nodes $\mathcal{V}_t$, we iteratively sample nodes from their active neighbors (those not yet visited) while preserving graph connectivity. This sampling process guarantees that each permutation respects the structural dependencies inherent in the message-passing process, as demonstrated through our comprehensive experimental results in Section \ref{sec:experiments}.


\end{document}